\renewcommand{\cite}{\citep}
\newcommand{\cmark}{\ding{51}}
\newcommand{\xmark}{\ding{55}}
\newcommand{\mathbbm}[1]{\text{\usefont{U}{bbm}{m}{n}#1}}
\DeclareMathOperator*{\argmax}{arg\,max}
\newtheorem{thm}{Theorem}[section]
\newtheorem{lem}{Lemma}[section]
\newtheorem{Definition}{Definition}[section]
\newenvironment{sciabstract}{%
\begin{quote} \baselineskip14pt\small\hfil {\bf Abstract} \hfil\\[3pt]}
{\end{quote}\vspace{6pt}}
\title{On Value Discrepancy of Imitation Learning}
\author
{Tian Xu$^1$, Ziniu Li$^2$, Yang Yu$^{1,\dag}$\\
\normalsize{$^1$National Key Laboratory for Novel Software Technology, Nanjing University, China}\\
\normalsize{$^2$Polixir}\\
\normalsize{emails: xut@lamda.nju.edu.cn, ziniu.li@polixir.ai, yuy@nju.edu.cn.}\\
\normalsize{$^\dag$To whom correspondence should be addressed}
}
\date{}
\begin{document}

\baselineskip16pt

\maketitle 

\begin{sciabstract}
Imitation learning trains a policy from expert demonstrations. Imitation learning approaches have been designed from various principles, such as behavioral cloning via supervised learning, apprenticeship learning via inverse reinforcement learning, and GAIL via generative adversarial learning. In this paper, we propose a framework to analyze the theoretical property of imitation learning approaches based on discrepancy propagation analysis. Under the infinite-horizon setting, the framework leads to the value discrepancy of behavioral cloning in an order of $O\bigl(\frac{1}{(1-\gamma)^2}\bigr)$. We also show that the framework leads to the value discrepancy of GAIL in an order of $O \bigl( \frac{1}{1-\gamma} \bigr)$. It implies that GAIL has less compounding errors than behavioral cloning, which is also verified empirically in this paper. To the best of our knowledge, we are the first one to analyze GAIL's performance theoretically. The above results indicate that the proposed framework is a general tool to analyze imitation learning approaches. We hope our theoretical results can provide insights for future improvements in imitation learning algorithms.
\end{sciabstract}

\section{Introduction}

Sequential decision problems are extremely challenging due to long-term dependency \cite{SuttonBook}. Compared to learning from scratch with reinforcement learning, learning from expert demonstrations (a.k.a, imitation learning) can significantly reduce sample complexity to learn an optimal policy. Successful applications by imitation learning include playing video games \cite{efficient_reduction_IL}, robot control \cite{RobotIL} and  autonomous driving \cite{AutonoumusDrivingIL}.

Imitation learning approaches have been designed from various principles. Behavioral cloning (BC) \cite{BC, bcfromobs, efficient_reduction_IL, dagger} learns a policy via directly minimizing policy (action) discrepancy on each visited state from expert demonstrations. Apprenticeship learning (AL) \cite{abbeel04, MaxEntIRL} infers a reward function from expert demonstrations via inverse reinforcement learning \cite{DBLP:conf/icml/NgR00} and subsequently extracts a policy from the recovered reward function with reinforcement learning. Recently, \citeauthor{GAIL} \cite{GAIL} reveal that AL can be viewed as a dual of state-action occupancy measure matching problem. From this connection, they propose a method called generative adversarial imitation learning (GAIL), which empirically achieves the state-of-art on complicated control tasks. However, little is known about its theoretical property.



In this paper, we focus on the horizon dependency and sample complexity of imitation learning approaches. Since AL is connected with GAIL via dual optimization (see Section \ref{gail_background}), we mainly focus on the analysis of BC and GAIL in this paper. First, we develop a framework to analyze discrepancy propagation in imitation learning. Then we derive the well-known compounding errors \cite{dagger, a_reduction_from_al_to_classification} in BC with the proposed framework. Importantly, we prove that the gap between the value of BC imitator's policy and the expert policy is $O\bigl(\frac{1}{(1-\gamma)^2}\bigr)$ while the gap for GAIL is $O \bigl( \frac{1}{1-\gamma} \bigr)$, where $\gamma$ is the discount factor. We also analyze sample complexity for BC and GAIL. To the best of our knowledge, we are the first one to analyze GAIL's performance theoretically. We hope our theoretical analysis can provide insights for future improvements in imitation learning algorithms.




\begin{table*}[ht]
    \caption{Summary of sample complexity and policy value discrepancy of imitation learning algorithms. The measures of the empirical loss $\epsilon$ are different. BC \cite{BC} and DAgger \cite{dagger} use $\epsilon_{1}$ and $\epsilon_{N}$ denote 0-1 loss where $\epsilon_{1} = \mathbb{E}_{s, a \sim \tau_{E}} [ I (\pi(s) \not = a)]$ and $\epsilon_{T} = \frac{1}{T} \sum_{i=1}^{T} \mathbb{E}_{s, a \sim D_{i} }[ I (\pi(s) \not = a)] $ respectively. FEM \cite{abbeel04} assumes an RL oracle is available. MWAL \cite{a_game_theoretic_approch_to_ap} uses $\epsilon_{R}$ denotes the reward error where $\epsilon_{R} = \max_{s} \bigl| R^{*}(s) - w \cdot \phi(s) \bigr|$. For GAIL, $\epsilon_{\mathcal{D}}$ represents neural distance error $d_{\mathcal{D}}(\hat{\rho}_{\pi_E} , \hat{\rho}_{\pi})$. 
    }
\label{table:comparsion}
\centering
\resizebox{1.0\textwidth}{!}
{
\begin{tabular}{@{}lllll@{}}
    \toprule
    \textbf{Algorithm}       & \textbf{\begin{tabular}[c]{@{}l@{}}Query for\\expert policy\end{tabular}} & \textbf{\begin{tabular}[c]{@{}l@{}}Interact with\\ environment\end{tabular}}                                         & \textbf{\begin{tabular}[c]{@{}l@{}}Sample complexity\end{tabular}} & \textbf{\begin{tabular}[c]{@{}l@{}}Policy value discrepancy\end{tabular}}  \\ \midrule
    
    DAgger         & \cmark                                                                              &  \cmark                                      & $O \left( \frac{\log (1/\delta)}{(1-\gamma)^3 \epsilon^2} \right)$\cite{dagger}                                                                      & $O \left( \frac{1}{1-\gamma} \left( \epsilon_{T} + \frac{1}{1-\gamma} + \sqrt{\frac{\log (1/ \delta) (1-\gamma)}{ m T }}  \right) \right)$\cite{dagger}                                                                                                                                       \\
    \begin{tabular}[c]{@{}l@{}}FEM\end{tabular}  & \xmark     & \cmark & $O \left( \frac{k \log (k/\delta) }{ (1-\gamma)^2 \epsilon^2}  \right) $\cite{abbeel04}                                                                     & $O \left( \frac{1}{1-\gamma} \sqrt{\frac{k \log (k/\delta)}{m}} \right)$\cite{abbeel04}                                                               
        \\
        MWAL  &\xmark
        &\cmark & $O \left( \frac{\log (k/\delta)}{(1-\gamma)^2 \epsilon^2} \right)$ \cite{a_game_theoretic_approch_to_ap}
        & $O \left( \frac{1}{1-\gamma} \sqrt{\frac{\log (k/\delta)}{m}} + \frac{\epsilon_{R}}{1-\gamma} \right)$\cite{a_game_theoretic_approch_to_ap} \\ \midrule
    \begin{tabular}[c]{@{}l@{}}BC \end{tabular}& \xmark                                                 & \xmark                                    &$O \left( \frac{ \vert \mathcal{S} \vert \log \vert \mathcal{A} \vert + \log (1/\delta)}{(1-\gamma)^4 \epsilon ^2 } \right) $  & $O \left( \frac{1}{(1-\gamma)^2} \left( \epsilon_{1} + \sqrt{\frac{\vert \mathcal{S} \vert \log \vert \mathcal{A} \vert + \log(1/\delta)}{m}} \right) \right)$                                              \\
    GAIL       &  \xmark                                                                              &  \cmark                                                       & $O \left( \frac{ \log(1/\delta) }{ (1-\gamma)^2 \epsilon ^2 } \right)$                                                                      & $O \left( \frac{1}{1-\gamma} \left( \sqrt{\epsilon_{\mathcal{D}}} + \sqrt{\hat{\mathcal{R}}_{\rho_{\pi_E}}^{(m)}(\mathcal{D})} + \sqrt{\frac{\log (1/\delta)}{m}}\right)\right)$                                                                                                                                           \\
    \bottomrule
    
    \end{tabular}
}
\end{table*}

This paper is organized as follows. First, we introduce the background and the taxonomy of imitation learning algorithms in Section \ref{sec:background}. Prior works are reviewed in Section \ref{sec:related_work}. In Section \ref{sec:Framework}, we develop a framework to analyze discrepancy propagation for imitation learning approaches. Subsequently, we derive the compounding errors of BC with the proposed method, and analyze behavioral cloning and generative adversarial imitation learning in Section \ref{sec:bc} and Section \ref{sec:GAIL}, respectively. Finally we conduct experiments to validate the theoretical analysis in Section \ref{sec:experiment}. 

\section{Background}
\label{sec:background}

\subsection{Preliminaries}
\textbf{Markov decision progress.} An infinite-horizon\footnote{In this paper, we only consider the infinite-horizon discounted MDP, and it is easy to extend our results into finite-horizon settings.} Markov decision progress is a tuple $M=\left\langle S, A, P, R, \gamma, d_0 \right\rangle$, where $S=\{s_1, \cdots,  s_n\}$ is the state space; $A = \{a_1, \cdots, a_k\}$ is the action space, and $d_0$ specifies the initial state distribution. The sequential decision progress is characterized as follows: at each time $t$, the agent observes a state $s_t$ from the environment and executes an action $a_t$, then the environment sends a reward signal $r(s_t, a_t)$ to the agent and transits to a new state $s_{t+1}$ according to $P(\cdot|s_t, a_t)$.


A stationary policy $\pi(\cdot|s)$ specifies an action distribution conditioned state $s$. The agent is judged by its policy value $V^\pi$ which is defined as the expected discounted cumulative rewards with a discount factor $\gamma \in (0, 1)$.
{
    \small\begin{equation}\label{eq_V_function}
        V^{\pi} = \mathbb{E}_{s_0 \sim d_0,  a_t \sim \pi(\cdot|s_t), s_{t+1} \sim p(\cdot|s_t, a_t)}\left[ \sum_{t=0}^{\infty} \gamma^t r(s_t, a_t)\right]
    \end{equation}
}
The main target of reinforcement learning is to search an optimal policy $\pi^*$ such that it maximizes the policy value (i.e., $\pi^* = \argmax_{\pi} V^\pi$). Complicated tasks like sparse reward settings require a large discount factor $\gamma$ to weight more the future returns. Hence, we represent the horizon dependency in terms of $\gamma$.


To facilitate later analysis, we introduce the discounted state distribution $d_{\pi}(s)$ and discounted state-action distribution $\rho_{\pi}(s, a)$, shown in Eq.\eqref{equation_state_distribution} and Eq.\eqref{equation_state_action_distribution}, respectively. For simplicity, we drop the qualifier "discounted" throughout.

\begin{equation}
\label{equation_state_distribution}
    d_{\pi}(s) = (1-\gamma)\sum_{t=0}^{\infty} \gamma^{t} \operatorname{Pr}\left(s_{t}=s ; \pi, d_{0} \right)
\end{equation}
\begin{equation}
\label{equation_state_action_distribution}
    \rho_{\pi}(s,a) = (1-\gamma)\sum_{t=0}^{\infty}\gamma^{t}\operatorname{Pr}(s_t=s, a_t=a;\pi, d_{0})  
\end{equation}

\textbf{Imitation learning.} Imitation learning (IL) trains a policy from expert demonstrations. In contrast to learning from scratch with reinforcement learning, imitation learning has more information about the optimal policy, thus can significantly reduce sample complexity. Imitation learning approaches have been designed from various principles, such as behavioral cloning \cite{BC,dagger} via supervised learning, apprenticeship learning \cite{abbeel04,SyedBS08} via inverse reinforcement learning \cite{DBLP:conf/icml/NgR00}, and GAIL \cite{GAIL} via generative adversarial learning. In the following, we briefly describe these methods and defer detailed analysis in Section \ref{sec:bc} and Section \ref{sec:GAIL}.



\subsection{Behavioral Cloning}
\label{bc_background}

Behavioral cloning directly mimics expert behaviors by minimizing policy discrepancy. Concretely, BC minimizes the Kullback-Leibler (KL) divergence between the expert policy distribution $\pi_E$ and the learned policy distribution $\pi$ for each state visited by the expert policy.
\begin{equation}
    \min_{\pi} D_{\mathrm{KL}}(\pi_E, \pi) = \min_{\pi} \mathbb{E}_{a \sim \pi_E(\cdot|s)}\left[\log \frac{\pi_E(a|s)}{\pi(a|s)} \right]
\end{equation}

In practice, we often only have access to expert trajectories $\tau_E = \{(s_1, a_1), (s_2, a_2), \cdots\}$ rather than an explicit formula for $\pi_E(\cdot | s)$. Therefore, we optimize the above loss function across state-action pairs contained in expert trajectories, which yields the following optimization problem ($\pi$ is parameterized by $\theta$):
\begin{equation}\label{equation_bc_empircal_loss}
    \min_{\theta} \sum_{(s, a) \sim \tau_E} -\log \pi_\theta(a|s)
\end{equation}


\subsection{Adversary-based Imitation Learning}
\label{gail_background}

In an adversarial learning fashion, apprenticeship learning \cite{abbeel04,SyedBS08} and generative adversarial imitation learning \cite{GAIL} infer a reward function from expert demonstrations and extract a policy with this reward function. But they are distinguished by the means of learning a reward function.

Apprenticeship learning \cite{abbeel04,SyedBS08} infers a reward function that separates expert policy and other policies in terms of policy value. Intuitively, this reward function assigns a high policy value for the expert policy and a low policy value for others. Then the learner maximizes its policy with this reward function to shrink the gap.
\begin{equation}
    \min_\pi \max_{r \in C} \mathbb{E}_{\pi_E}[r(s, a)]  - \mathbb{E}_{\pi}[r(s, a)]
\end{equation}
Where $C$ is class of reward functions. In particular, \citeauthor{abbeel04} \cite{abbeel04} use $C_{\mathrm{linear}} = \{\sum_i w_i f_i: ||w||_2 \leq 1\}$, and \citeauthor{SyedBS08} \cite{SyedBS08} uses $C_{\mathrm{convex}} = \{\sum_i w_i f_i: \sum_i w_i = 1, w_i > 0 \quad \forall i\}$, where $f_i$ is the reward basis function.

Generative adversarial imitation learning \cite{GAIL} also learns a reward function. This reward function is actually a binary-classifier that learns to recognize whether a state-action pair comes from the expert policy. The learner attempts to replicate expert behaviors via maximizing scores given by the classifier.
{
\small\begin{equation}
    \label{equation:gail}
        \min_\pi \max_{D \in (0, 1)^{S \times A}} \mathbb{E}_{ \pi}\left[\log \big(D(s, a) \big)] + \mathbb{E}_{ \pi_E}[\log(1 - D(s, a) \big) \right]
    \end{equation}
}
where $D$ is the binary-classifier (reward function) as mentioned.

Recently, \citeauthor{GAIL} \cite{GAIL} reveal that apprenticeship learning can be viewed as a state-action occupancy matching problem, and the difference between AL and GAIL is the measure of state-action matching.
\begin{equation}
    \min_\pi  \psi^*(\rho_\pi - \rho_{\pi_E})
\end{equation}
where $\psi^*$ is the state-action occupancy measure dependent on the specific solution to the inner problem defined in the original min-max problem. From this dual optimization perspective, the prime problem in GAIL can be recast as $\underset{\pi}{\min} \, D_{\mathrm{JS}} (\rho_{\pi}, \rho_{\pi_E}) $, where $D_{\mathrm{JS}}$ is the Jensen-Shannon (JS) divergence.
\begin{equation}
    D_{\mathrm{JS}}(\rho_\pi, \rho_{\pi_E}) =  \frac{1}{2} \left[ D_{\mathrm{KL}} (\rho_{\pi}, \frac{\rho_{\pi}+\rho_{\pi_E}}{2} )+D_{\mathrm{KL}} (\rho_{\pi_E}, \frac{\rho_{\pi}+\rho_{\pi_E}}{2}) \right]
\end{equation}

\section{Related Work}
\label{sec:related_work}

Learning from scratch with reinforcement learning requires enormous samples to find an optimal policy \cite{Rmax, KearnsS02}. Imitation learning is sample efficient for sequential decision problem via learning from expert demonstrations \cite{BC, dagger, DBLP:conf/icml/NgR00, GAIL}. In this section, we review previous imitation learning algorithms with the focus on their horizon dependency and sample complexity.

Prior works \cite{efficient_reduction_IL, a_reduction_from_al_to_classification, dagger} reveal that behavioral cloning leads to the compounding errors (a quadratic regret concerning horizon length). The reason is that training data generated by the expert policy and testing data generated by the learned policy is not i.i.d as the one in traditional supervised learning. We develop an alternative method to analyze the horizon dependency of BC. Our analysis shares some commons to these results, but highlights that minimizing policy discrepancy in BC naturally works worse under long-horizon settings (see Section \ref{sec:bc}). We also notice that DAgger \cite{dagger} improves the policy value error from $O(T^2)$ to $O(T)$ at the cost of querying additional expert guidance when training, where $T$ is task horizon.

Inverse reinforcement learning is first proposed in \cite{DBLP:conf/icml/NgR00} via recovering a reward function to satisfy Bellman optimality\footnote{Recovering a Bellman optimal policy is strongly strict than recovering a near-optimal policy in terms of policy value.}. Recently \citeauthor{DBLP:journals/correctness/abs-1906-00422}\cite{DBLP:journals/correctness/abs-1906-00422} reformulate this problem as a L1-regularized support vector machine problem and shows the sample complexity of $O\left( (\frac{n\gamma}{(1- \gamma)^2})^2  \log(nk) \right)$. However, the gap of policy value between recovered policy and expert policy is still unknown. Apprenticeship learning (FEM) \cite{abbeel04} and multiplicative-weights apprenticeship learning (MWAL) \cite{SyedBS08} infer a reward function that separates expert policy and other policies in terms of policy value. For FEM \cite{abbeel04}, the sample complexity of $O \left(\frac{k}{\epsilon^2 (1-\gamma)^2} \log k \right)$ is guaranteed to learn a $\epsilon$-optimal policy with the assumption that true reward function lies in linear combinations of reward basis functions, where $k$ is the number of such defined functions. MWAL \cite{SyedBS08} reduces the sample complexity to $O \left(\frac{1}{\epsilon^2 (1-\gamma)^2} \log k \right)$ by the multiplicative weights algorithm. Note that FEM and MWAL are still computationally slow since they solve an RL problem each iteration. Several works \cite{AIRL, GuidedCostLearning} extend apprenticeship learning into high-dimensional problems. Recently, \citeauthor{GAIL}\cite{GAIL} deduce the dual of maximum causal entropy IRL \cite{MaxEntIRL, ZiebartBD10}, upon which they describe an algorithm called generative adversarial imitation learning (GAIL). \citeauthor{GAIL} \cite{GAIL} show that ideally GAIL aims to search a policy such that it minimizes the Jensen-Shannon divergence to the expert in terms of state-action occupancy measure. Importantly, we find that this objective function theoretically results in less horizon dependency.

Theoretically analyzing algorithms with non-linear function approximation like GAIL\cite{GAIL} extremely difficult. Our analysis relies on the recently proposed generalization theory \cite{Generalization_in_GAN, D-G_tradeoff} for generative adversarial networks \cite{GAN}. However, we are interested in the policy value induced by the generator (policy) under the Markov decision process settings. Besides, we note that performance differences between BC \cite{BC} and GAIL \cite{GAIL} can be attributed to the used discrepancy measure. As discussed in \cite{KL_JS}, generative models based on $D_{\mathrm{KL}}(P,Q)$ tends to fit models $Q$ that cover all modes of $P$ while models based on $D_{\mathrm{JS}}(P,Q)$ can generate nature-look images with the punishment that forces $Q$ concentrate around the largest mode of $P$. As we have emphasized, this result, however, cannot directly be applied to imitation learning settings due to the nature of horizon dependency.

We summarize the theoretical properties of the mentioned imitation learning algorithms in Table \ref{table:comparsion}. The policy value discrepancy of BC \cite{BC} is $O\big(\frac{1}{(1-\gamma)^2}\big)$, and others achieve $O\big(\frac{1}{1-\gamma}\big)$. Note the cost of the optimization problem is different. DAgger \cite{dagger} achieves this by querying more expert guidance when training, while adversary-based algorithms like FEM \cite{abbeel04}, MWAL \cite{SyedBS08}, and GAIL \cite{GAIL} require interactions to solve the min-max problem.

\section{Discrepancy Analysis in IL}\label{sec:Framework}

In this section, we develop a framework to derive policy value discrepancy $|V^{\pi_{E}} - V^{\pi}|$ for various imitation learning algorithms. We trace the source of policy value discrepancy and characterize the relationship between different discrepancy measures shown in Figure \ref{fig:value}.


\begin{figure}[ht]
    \centering
    \includegraphics[width=0.45\textwidth]{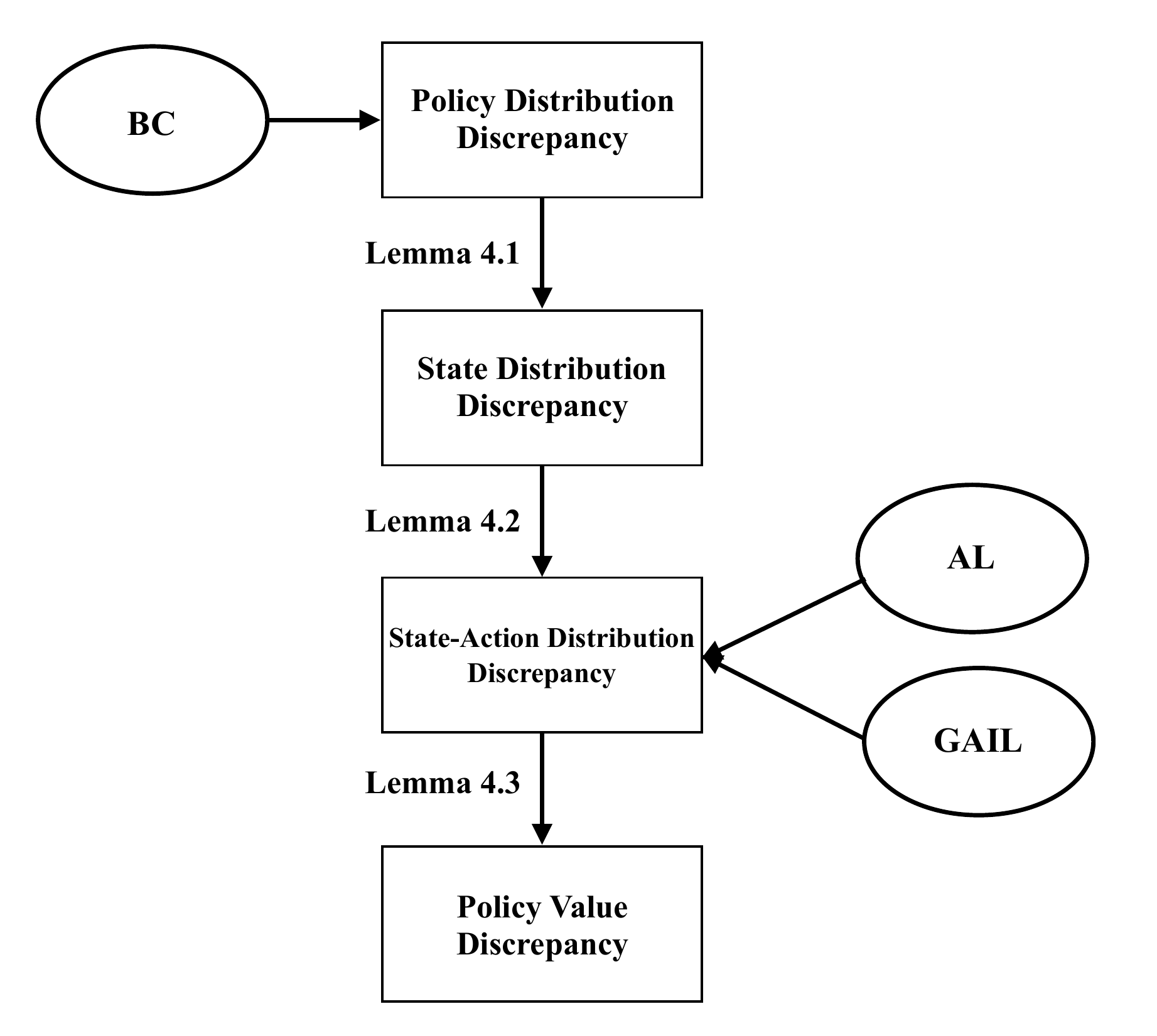}
    \caption{Discrepancy propagation in imitation learning}
    \label{fig:value}
\end{figure}

\begin{lem}\label{leamma_s_cond}
    Let total variation between two distributions defined as $D_{\mathrm{TV}}(P, Q) = \frac{1}{2} ||P - Q||_1$. Let $\pi_{E}$ denote the expert policy and $\pi$ denote the imitator's policy. Then the total variation between $d_{\pi_E}$ and $d_{\pi}$ is bounded by the expectation of total variation between $\pi(\cdot|s)$ and $\pi_{E}(\cdot|s)$ over state distribution $d_{\pi_E}$.
    \\
    \begin{center}
    $
    D_{\mathrm{TV}}(d_{\pi}, d_{\pi_{E}}) \leq \frac{\gamma}{1-\gamma}\mathbb{E}_{s \sim d_{\pi_E}}\biggl[ D_{\mathrm{TV}} \big( \pi (\cdot|s), \pi_{E}(\cdot|s) \big) \biggr]$
    \end{center}
\end{lem}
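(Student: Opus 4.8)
The plan is to track how the per-state policy discrepancy propagates into a discrepancy between the induced state distributions, using a Bellman-flow characterization of $d_\pi$. First I would record the flow identity that follows directly from Eq.~\eqref{equation_state_distribution}: writing $P^\pi(s'\mid s) = \sum_a \pi(a\mid s) P(s'\mid s,a)$ for the state-to-state transition kernel induced by $\pi$, and peeling off the $t=0$ term, one obtains
\[
d_\pi(s') = (1-\gamma)d_0(s') + \gamma \sum_s P^\pi(s'\mid s)\, d_\pi(s),
\]
and the analogous identity for $\pi_E$.

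Next I would subtract the two flow identities. The initial-distribution terms cancel, leaving $d_\pi - d_{\pi_E}$ expressed through $(P^\pi)^\top d_\pi - (P^{\pi_E})^\top d_{\pi_E}$, viewing the distributions as vectors and the kernels as matrices. The key algebraic move is to add and subtract the cross term $(P^\pi)^\top d_{\pi_E}$, which splits the difference into one part driven by the kernel gap $P^\pi - P^{\pi_E}$ evaluated against $d_{\pi_E}$, and one self-referential part of the form $\gamma (P^\pi)^\top (d_\pi - d_{\pi_E})$. Collecting the latter yields
\[
d_\pi - d_{\pi_E} = \gamma\,(I - \gamma (P^\pi)^\top)^{-1}(P^\pi - P^{\pi_E})^\top d_{\pi_E}.
\]
Choosing this particular cross term, rather than $(P^{\pi_E})^\top d_\pi$, is precisely what makes the expectation on the right-hand side fall over $d_{\pi_E}$ instead of $d_\pi$, matching the claimed bound.

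Then I would take $\ell_1$ norms. The resolvent $(I-\gamma(P^\pi)^\top)^{-1}$ expands as the Neumann series $\sum_{t\ge0}\gamma^t((P^\pi)^\top)^t$, and since $(P^\pi)^\top$ sends distributions to distributions it is non-expansive in $\ell_1$ (each application satisfies $\|(P^\pi)^\top x\|_1 \le \|x\|_1$ because the rows of $P^\pi$ sum to one), so the series has $\ell_1\!\to\!\ell_1$ norm at most $\sum_t \gamma^t = 1/(1-\gamma)$; this produces the factor $\gamma/(1-\gamma)$. It remains to bound $\|(P^\pi-P^{\pi_E})^\top d_{\pi_E}\|_1$: applying the triangle inequality and then summing out $s'$ via $\sum_{s'}P(s'\mid s,a)=1$ collapses the transition kernel and leaves exactly $\sum_s d_{\pi_E}(s)\,\|\pi(\cdot\mid s)-\pi_E(\cdot\mid s)\|_1 = 2\,\mathbb{E}_{s\sim d_{\pi_E}}[D_{\mathrm{TV}}(\pi(\cdot\mid s),\pi_E(\cdot\mid s))]$. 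Combining this with $D_{\mathrm{TV}}(d_\pi,d_{\pi_E})=\tfrac12\|d_\pi-d_{\pi_E}\|_1$ gives the stated inequality, the factors of $2$ cancelling cleanly.

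I expect the main obstacle to be essentially bookkeeping: selecting the correct cross term so the final expectation is taken over $d_{\pi_E}$, and justifying the operator-norm bound on the resolvent, i.e. recognizing that the induced transition operator is an $\ell_1$ contraction so the Neumann series converges with norm $1/(1-\gamma)$. Everything else reduces to triangle inequalities together with the stochasticity of $\pi$, $\pi_E$, and $P(\cdot\mid s,a)$.
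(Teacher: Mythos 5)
Your proposal is correct and is essentially the paper's own argument: the paper derives the same key identity $d_\pi - d_{\pi_E} = \gamma (I-\gamma P_\pi)^{-1}(P_\pi - P_{\pi_E})\,d_{\pi_E}$ (via the resolvent identity $M_\pi - M_{\pi_E} = \gamma M_\pi (P_\pi - P_{\pi_E}) M_{\pi_E}$ rather than your add-and-subtract on the Bellman flow equation, but these are the same algebra), then bounds $\|M_\pi\|_1 \le 1/(1-\gamma)$ by the Neumann series and $\|(P_\pi - P_{\pi_E}) d_{\pi_E}\|_1 \le 2\,\mathbb{E}_{s\sim d_{\pi_E}}[D_{\mathrm{TV}}(\pi(\cdot|s),\pi_E(\cdot|s))]$ exactly as you do. No gaps.
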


This Lemma suggests the relationship between policy discrepancy and state distribution discrepancy, and the proof is left in Appendix \ref{appendix_for_proof}. Intuitively, a disagreement of decision at state $s_t$ may result in a different $s_{t+1}$. As this process repeats, state distribution discrepancy accumulates over time steps. Based on Lemma \ref{leamma_s_cond}, we further derive the relationship between state-action distribution discrepancy and policy discrepancy.

\begin{lem}\label{lemma_bc_sa_con}
The total variation between two state-action distributions $D_{\mathrm{TV}}(\rho_{\pi},\rho_{\pi_{E}})$ is bounded by the expectation of total variation between $\pi(\cdot|s)$ and $\pi_{E}(\cdot|s)$ over state distribution $d_{\pi_E}$:
    
\begin{equation}
D_{\mathrm{TV}}(\rho_{\pi}, \rho_{\pi_{E}}) \leq \frac{1}{1-\gamma}\mathbb{E}_{s \sim d_{\pi_E}} \bigl[ D_{\mathrm{TV}} \bigl(\pi(\cdot|s), \pi_{E}(\cdot|s) \bigr) \bigr]
\end{equation}
\end{lem}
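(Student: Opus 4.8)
The plan is to reduce the state--action statement to the state statement of Lemma~\ref{leamma_s_cond} by exploiting the factorization $\rho_{\pi}(s,a) = d_{\pi}(s)\,\pi(a|s)$, which holds because the discounted probability of visiting the pair $(s,a)$ equals the discounted probability of visiting $s$ times the probability of choosing $a$ there. With this identity in hand, I would expand the total variation directly from its definition,
\begin{equation*}
D_{\mathrm{TV}}(\rho_{\pi}, \rho_{\pi_E}) = \frac{1}{2}\sum_{s,a}\bigl| d_{\pi}(s)\pi(a|s) - d_{\pi_E}(s)\pi_E(a|s)\bigr|,
\end{equation*}
and then split each summand into a ``state'' contribution and a ``policy'' contribution.

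The key step is to insert and subtract the cross term $d_{\pi_E}(s)\pi(a|s)$. It is essential to pair $d_{\pi_E}$ with the learner's policy $\pi$ here, rather than using the other natural choice $d_{\pi}(s)\pi_E(a|s)$, so that the resulting policy-discrepancy term is weighted by $d_{\pi_E}$ and hence matches the expectation $\mathbb{E}_{s\sim d_{\pi_E}}$ on the right-hand side. After applying the triangle inequality, summing over $a$, and using $\sum_a \pi(a|s) = 1$, the first group collapses to $D_{\mathrm{TV}}(d_{\pi}, d_{\pi_E})$ and the second to the expected policy total variation, giving
\begin{equation*}
D_{\mathrm{TV}}(\rho_{\pi}, \rho_{\pi_E}) \leq D_{\mathrm{TV}}(d_{\pi}, d_{\pi_E}) + \mathbb{E}_{s\sim d_{\pi_E}}\bigl[D_{\mathrm{TV}}(\pi(\cdot|s),\pi_E(\cdot|s))\bigr].
\end{equation*}

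Finally I would invoke Lemma~\ref{leamma_s_cond} to bound the state term by $\frac{\gamma}{1-\gamma}\mathbb{E}_{s\sim d_{\pi_E}}[D_{\mathrm{TV}}(\pi,\pi_E)]$ and combine the two coefficients via $\frac{\gamma}{1-\gamma} + 1 = \frac{1}{1-\gamma}$, which reproduces the claimed constant exactly. I do not anticipate a serious obstacle, since Lemma~\ref{leamma_s_cond} already carries the compounding-error analysis; the only point requiring care is the choice of cross term described above, because the alternative choice would leave the policy discrepancy measured under $d_{\pi}$ instead of $d_{\pi_E}$, and the bound would then fail to close into the stated form.
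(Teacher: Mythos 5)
Your proposal is correct and follows essentially the same route as the paper's own proof: the paper decomposes $\pi_E(a|s)d_{\pi_E}(s) - \pi(a|s)d_{\pi}(s)$ using exactly the cross term $d_{\pi_E}(s)\pi(a|s)$ you identify, obtains the same two-term bound, and closes with Lemma~\ref{leamma_s_cond} and the identity $1 + \frac{\gamma}{1-\gamma} = \frac{1}{1-\gamma}$. Your remark about why the other cross term would not close into the stated form is a correct and worthwhile observation, but the argument itself is the same.
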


\begin{proof}
Recall the definition of $\rho_{\pi}$ in Eq.\eqref{equation_state_action_distribution}, we have
\begin{align}\label{joint_1}
        & \qquad D_{\mathrm{TV}}(\rho_{\pi}, \rho_{\pi_{E}}) \\ &= \frac{1}{2} \sum_{s, a} \bigl| \pi_{E}(a|s) d_{\pi_E}(s) - \pi(a|s) d_{\pi}(s) \bigr|
        \nonumber\\
        &= \frac{1}{2} \sum_{s, a} \bigl| \bigl[\pi_{E}(a|s) - \pi(a|s)\bigr] d_{\pi_{E}}(s)  +\bigl[d_{\pi_E}(s) - d_{\pi}(s) \bigr]\pi(a|s) \bigr|
        \nonumber\\
        &\leq \frac{1}{2} \sum_{s, a} \bigl| \pi_{E}(a|s) - \pi(a|s) \bigr|d_{\pi_{E}}(s)
        + \frac{1}{2} \sum_{s, a}\pi(a|s) \bigl| d_{\pi_{E}}(s) - d_{\pi}(s) \bigr|  \qedhere 
\end{align}
It is easy to verify that the first term is the total variation between two policy distributions $\pi(\cdot|s)$ and $\pi_{E}(\cdot|s)$: 
\begin{equation}\label{cond_1}
    \begin{split}
        \frac{1}{2} \sum_{s, a} \bigl| \pi_{E}(a|s) - \pi(a|s) \bigr|d_{\pi_{E}}(s)
        &= \sum_{s} d_{\pi_E}(s) \sum_{a} \frac{1}{2} \bigl| \pi_{E}(a|s) - \pi(a|s) \bigr|  
        \\
        &= \mathbb{E}_{s \sim d_{\pi_E} } [D_{\mathrm{TV}}(\pi(\cdot|s) , \pi_{E}(\cdot|s) )]
    \end{split}
\end{equation}
The second term is the total variation between state distribution  $d_{\pi}$ and $d_{\pi_E}$:
\begin{equation}\label{state_1}
    \begin{split}
        \frac{1}{2} \sum_{s, a}\pi(a|s) \bigl| d_{\pi_{E}}(s) - d_{\pi}(s) \bigr| &= \frac{1}{2} \sum_{s}  \bigl| d_{\pi_{E}}(s) - d_{\pi}(s) \bigr| \sum_{a} 
        \pi(a|s)
                \\
        &= D_{\mathrm{TV}}( d_{\pi}, d_{\pi_{E}})
    \end{split}
\end{equation}
Combining Eq.\eqref{cond_1}, \eqref{state_1} with \eqref{joint_1}, we get the following result based on Lemma \ref{leamma_s_cond}.
\begin{equation*}
    \begin{split}
        D_{\mathrm{TV}}(\rho_{\pi}, \rho_{\pi_{E}}) &\leq 
        \mathbb{E}_{s \sim d_{\pi_E} } [D_{\mathrm{TV}}(\pi_{E}(\cdot|s), \pi(\cdot|s))]
        + D_{\mathrm{TV}}(d_{\pi}, d_{\pi_{E}})
        \\
        &\leq  (1 + \frac{\gamma}{1-\gamma})\mathbb{E}_{s \sim d_{\pi_E} } [D_{\mathrm{TV}}( \pi(\cdot|s), \pi_{E}(\cdot|s))]
        \\
        &= \frac{1}{1-\gamma}\mathbb{E}_{s \sim d_{\pi_E} } [D_{\mathrm{TV}}( \pi(\cdot|s), \pi_{E}(\cdot|s))]\qedhere
    \end{split} 
\end{equation*}
\end{proof}
Similarly, Lemma \ref{lemma_bc_sa_con} indicates that optimizing one-step policy discrepancy naturally introduces a horizon-dependent term $\frac{1}{1-\gamma}$. To build the policy value gap with state-action distribution discrepancy, we reformulate the policy value defined in Eq.\eqref{eq_V_function} with an alternative representation. 
\begin{equation}\label{equation_policy_value}
    V^\pi = \frac{1}{1-\gamma} \mathbb{E}_{\rho_\pi}[r(s, a)]
\end{equation}
where the denominator is to compensate the normalization constant induced in Eq.\eqref{equation_state_action_distribution}.
\begin{lem}\label{lemma_bc_value}
    Assume that reward function is bounded in absolute value $R_{\max}$. Then the policy value discrepancy is bounded by the state-action distribution discrepancy.
    \begin{displaymath}
        \begin{aligned}
            \bigl | V^{\pi} - V^{\pi_{E}} \bigr| \leq \frac{2 R_{\max} }{1-\gamma}  D_{\mathrm{TV}}(\rho_{\pi}, \rho_{\pi_{E}})  
        \end{aligned}  
    \end{displaymath}
\end{lem}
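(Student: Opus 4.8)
The plan is to use the alternative representation of policy value given in Eq.~\eqref{equation_policy_value}, which expresses $V^\pi = \frac{1}{1-\gamma}\mathbb{E}_{\rho_\pi}[r(s,a)]$. First I would write the difference as
\begin{equation*}
    \bigl|V^{\pi} - V^{\pi_E}\bigr| = \frac{1}{1-\gamma}\left| \sum_{s,a} r(s,a)\bigl(\rho_\pi(s,a) - \rho_{\pi_E}(s,a)\bigr)\right|,
\end{equation*}
pulling the common factor $\frac{1}{1-\gamma}$ out front. This reduces the problem to bounding a weighted sum of the signed difference $\rho_\pi - \rho_{\pi_E}$, where the weights are reward values bounded by $R_{\max}$ in absolute value.

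Next I would apply the triangle inequality to move the absolute value inside the sum and then bound each reward factor by $R_{\max}$, giving
\begin{equation*}
    \bigl|V^{\pi} - V^{\pi_E}\bigr| \leq \frac{1}{1-\gamma}\sum_{s,a} \bigl|r(s,a)\bigr|\,\bigl|\rho_\pi(s,a) - \rho_{\pi_E}(s,a)\bigr| \leq \frac{R_{\max}}{1-\gamma}\sum_{s,a}\bigl|\rho_\pi(s,a) - \rho_{\pi_E}(s,a)\bigr|.
\end{equation*}
The remaining sum is precisely $\|\rho_\pi - \rho_{\pi_E}\|_1 = 2 D_{\mathrm{TV}}(\rho_\pi, \rho_{\pi_E})$ by the definition of total variation stated in Lemma~\ref{leamma_s_cond}. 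Substituting this identity recovers the claimed factor of $2$, yielding exactly $\frac{2R_{\max}}{1-\gamma} D_{\mathrm{TV}}(\rho_\pi, \rho_{\pi_E})$.

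This argument is essentially a Hölder-type bound pairing the bounded reward against the $L^1$ distance between occupancy measures, so I do not anticipate a genuine obstacle. The only point requiring minor care is keeping the normalization constant straight: both the $\frac{1}{1-\gamma}$ from Eq.~\eqref{equation_policy_value} and the factor of $2$ converting $\|\cdot\|_1$ to $D_{\mathrm{TV}}$ must be tracked so that the final constant is $\frac{2R_{\max}}{1-\gamma}$ rather than off by a factor of two. Combined with Lemma~\ref{lemma_bc_sa_con}, this lemma completes the propagation chain from one-step policy discrepancy all the way to policy value discrepancy, introducing a second factor of $\frac{1}{1-\gamma}$ on top of the one already incurred in the occupancy-measure bound.
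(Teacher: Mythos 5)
Your proposal is correct and follows essentially the same route as the paper's proof: rewrite the value via $V^\pi = \frac{1}{1-\gamma}\mathbb{E}_{\rho_\pi}[r(s,a)]$, apply the triangle inequality, bound the reward by $R_{\max}$, and convert the $\ell_1$ norm to $2D_{\mathrm{TV}}$. The constant tracking is handled correctly, so there is nothing to add.
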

\begin{proof}
    By the Eq.\eqref{equation_policy_value}, we have that
    \begin{equation}\label{equation_bc_v1}
        \begin{split}
            \bigl| V^{\pi} - V^{\pi_{E}} \bigr| &= \frac{1}{1-\gamma} \bigl| \sum_{s, a} \bigl[ \rho_{\pi}(s,a) - \rho_{\pi_{E}}(s,a) \bigr] r(s,a) \bigr|
            \\
            &\leq \frac{1}{1-\gamma}\sum_{s, a}\bigl| \rho_{\pi}(s,a) - \rho_{\pi_{E}}(s,a) \bigr| r(s,a)
            \\
            &\leq \frac{2R_{\max}}{1-\gamma}D_{\mathrm{TV}}(\rho_{\pi}, \rho_{\pi_{E}})
        \end{split}\qedhere
    \end{equation}
\end{proof}

It turns out that state-action discrepancy plays an important role in analyzing the policy value discrepancy later. In the following, we will utilize this framework to analyze imitation learning approaches. Since apprenticeship learning is connected with GAIL via dual optimization as discussed previously, we mainly focus on the analysis of BC and GAIL in this paper.

\section{Behavioral Cloning}
\label{sec:bc}

In this section, we first deduce the compounding errors \cite{dagger, a_reduction_from_al_to_classification}. Subsequently, we show the sample complexity of BC.

\subsection{Horizon Dependency}

With Lemma \ref{lemma_bc_sa_con} and Lemma \ref{lemma_bc_value}, we can easily build up the relationship between the policy discrepancy and policy value discrepancy.
    

\begin{thm}\label{theorem_value_bc}
    Let $\pi_{E}$ and $\pi_{bc}$ denote the expert policy and BC imitator's policy. Assume that reward function is bounded in absolute value $R_{\max}$. Then the BC imitator has policy value error
    \begin{equation}\label{equation_bc_value_discrepancy}
        \begin{split}
            \bigl| V^{\pi_{bc}} - V^{\pi_{E}} \bigr| \leq \frac{2 R_{\max}}{(1-\gamma)^2}\mathbb{E}_{s \sim d_{\pi_E}}[D_{\mathrm{TV}}(\pi_{bc}(\cdot|s), \pi_{E}(\cdot|s))]
        \end{split}
    \end{equation}
    \end{thm}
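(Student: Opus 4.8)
The plan is to chain the two preceding lemmas directly. Lemma \ref{lemma_bc_value} already converts a bound on state-action distribution discrepancy into a bound on the policy value gap, while Lemma \ref{lemma_bc_sa_con} converts a bound on the one-step policy discrepancy (averaged over $d_{\pi_E}$) into a bound on state-action distribution discrepancy. Since the theorem's right-hand side is exactly the expected one-step total variation $\mathbb{E}_{s\sim d_{\pi_E}}[D_{\mathrm{TV}}(\pi_{bc}(\cdot|s),\pi_E(\cdot|s))]$ and its left-hand side is the policy value gap $|V^{\pi_{bc}} - V^{\pi_E}|$, the proof amounts to composing these two inequalities instantiated at $\pi = \pi_{bc}$.

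Concretely, I would first apply Lemma \ref{lemma_bc_value} with the imitator policy $\pi_{bc}$, obtaining $|V^{\pi_{bc}} - V^{\pi_E}| \le \frac{2R_{\max}}{1-\gamma} D_{\mathrm{TV}}(\rho_{\pi_{bc}}, \rho_{\pi_E})$. Then I would substitute into this the bound from Lemma \ref{lemma_bc_sa_con}, namely $D_{\mathrm{TV}}(\rho_{\pi_{bc}}, \rho_{\pi_E}) \le \frac{1}{1-\gamma}\mathbb{E}_{s\sim d_{\pi_E}}[D_{\mathrm{TV}}(\pi_{bc}(\cdot|s),\pi_E(\cdot|s))]$. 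The two horizon-dependent factors of $\frac{1}{1-\gamma}$ multiply, yielding the quadratic $\frac{1}{(1-\gamma)^2}$ dependence and precisely the stated right-hand side.

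There is essentially no remaining obstacle at this stage: the structural content has already been established in Lemmas \ref{leamma_s_cond} through \ref{lemma_bc_value}, and the theorem is a one-line consequence. The only things to verify are that Lemma \ref{lemma_bc_sa_con} applies to $\pi_{bc}$ (it holds for any policy, with no special assumption) and that the boundedness hypothesis on the reward carries through, which it does since it is inherited verbatim from Lemma \ref{lemma_bc_value}. The conceptual point worth emphasizing is that the two appearances of $\frac{1}{1-\gamma}$ have distinct origins: one arises from the reward-summation normalization in the value representation of Eq.~\eqref{equation_policy_value}, and the other from the accumulation of state-distribution shift over the horizon in Lemma \ref{leamma_s_cond}. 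It is precisely their product that produces the compounding error characteristic of behavioral cloning.
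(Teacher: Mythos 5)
Your proposal is correct and is exactly the paper's (implicit) argument: the paper introduces Theorem \ref{theorem_value_bc} with the remark that it follows directly from Lemma \ref{lemma_bc_sa_con} and Lemma \ref{lemma_bc_value}, and your composition of the two inequalities, multiplying the two $\frac{1}{1-\gamma}$ factors, is precisely that derivation. Nothing is missing.
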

    
    Theorem \ref{theorem_value_bc} implies a quadratic policy value gap for behavioral cloning in terms of the horizon. One can understand this result by imaging the case that learned policy may visit states which are not covered in expert behaviors. In that case, the policy value gap accumulates quadratically in the horizon length. We underline that without considering temporal structure, objective function based on one-step policy discrepancy should be used carefully in MDP with long-horizon settings.
    
    
    Though we derive the above results from a different perspective, our results are consistent with previous works \cite{efficient_reduction_IL, dagger}. In particular, the quadratic bound in Theorem \ref{theorem_value_bc} can be viewed as an extension of \cite{dagger, a_reduction_from_al_to_classification} to infinite-horizon settings. Note that our results are not restricted to the metric of total variation. Here, we briefly discuss the settings based on KL- divergence. It is known that $D_{\mathrm{TV}}(P, Q) \leq \sqrt{\frac{1}{2} D_{\mathrm{KL}}(P, Q)}$ and applying this inequality into Eq.\eqref{equation_bc_value_discrepancy} yields
    \begin{align*}
        \left| V^{\pi_{bc}} - V^{\pi_{\pi_{E}}} \right| \leq \frac{ \sqrt{2} R_{\max} }{(1-\gamma)^2} \mathbb{E}_{s \sim d_{\pi_E}}\left[\sqrt{ D_{\mathrm{KL}} (\pi_{bc}, \pi_{E})}\right]
    \end{align*}
    If we let $\epsilon = D_{\mathrm{KL}} (\pi_{bc}, \pi_{E})$, the policy value error can be bounded by $O \left( \sqrt{\epsilon} \right)$, which is also consist with \cite{a_reduction_from_al_to_classification}.

\subsection{Sample Complexity}

In this section, we analyze the sample complexity of BC. We start with the generalization error with respect to state-action distribution, then gives the sample complexity in terms of policy value discrepancy.

\begin{lem}\label{lemma_bc_pac}
Let $\Pi$ be the set of all deterministic policy and $\vert \Pi \vert = {\vert A \vert} ^{\vert S \vert} = k^n$. Assume that there does not exist a policy $\pi \in \Pi$ such that $\pi(s^{(i)}) = a^{(i)} \; \forall i\in\{1, \dots, m\}$. Then, for any $\delta > 0 $, with probability at least $1-\delta$, the following inequality holds:
\begin{equation*}\label{bc_bound_1}
    \begin{split}
        \mathbb{E}_{s \sim {d_{\pi_E}}} \bigl[ D_{\mathrm{TV}} \big( \pi_{bc}(\cdot|s), \pi_{E}(\cdot|s) \big) \bigr] \leq & \frac{1}{m}\sum_{i=1}^{m} I \bigl[\pi_{bc} (s^{(i)}) \not= a^{(i)} \bigr] + \\
        & \sqrt{\frac{\log\vert \Pi \vert + \log (2/\delta) }{2m}}
    \end{split}
\end{equation*}
\end{lem}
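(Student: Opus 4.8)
The plan is to recognize this as a standard agnostic PAC generalization bound and to reduce the left-hand side to a population $0$--$1$ risk, after which Hoeffding's inequality together with a union bound over the finite policy class does all the work. First I would compute the per-state total variation explicitly. Since every $\pi\in\Pi$ is deterministic, $\pi(\cdot|s)$ is a point mass at $\pi(s)$, so
\[
D_{\mathrm{TV}}\bigl(\pi(\cdot|s),\pi_{E}(\cdot|s)\bigr)=\frac{1}{2}\sum_{a}\bigl|I[a=\pi(s)]-\pi_{E}(a|s)\bigr|=1-\pi_{E}\bigl(\pi(s)\,\big|\,s\bigr)=\Pr_{a\sim\pi_{E}(\cdot|s)}\bigl[a\neq\pi(s)\bigr].
\]
Taking the expectation over $s\sim d_{\pi_E}$ and using $\rho_{\pi_E}(s,a)=d_{\pi_E}(s)\pi_E(a|s)$ then identifies the left-hand side with the population risk $L(\pi):=\Pr_{(s,a)\sim\rho_{\pi_E}}[\pi(s)\neq a]$ evaluated at $\pi=\pi_{bc}$. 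Correspondingly, the first term on the right-hand side is exactly the empirical risk $\widehat L(\pi_{bc})=\tfrac1m\sum_{i=1}^m I[\pi_{bc}(s^{(i)})\neq a^{(i)}]$, so the inequality to be proved is the generalization gap $L(\pi_{bc})\le\widehat L(\pi_{bc})+\epsilon$ for the stated $\epsilon$.

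Next I would exploit that the demonstrations $(s^{(i)},a^{(i)})$ are drawn i.i.d.\ from $\rho_{\pi_E}$, so that for each \emph{fixed} $\pi\in\Pi$ the empirical risk $\widehat L(\pi)$ is an average of $m$ independent $[0,1]$-valued indicators with mean $L(\pi)$. Hoeffding's inequality (in its two-sided form) then gives $\Pr\bigl[|L(\pi)-\widehat L(\pi)|>\epsilon\bigr]\le 2e^{-2m\epsilon^2}$, where the factor $2$ is precisely the source of the $\log(2/\delta)$ appearing in the statement. The crucial and delicate step is that $\pi_{bc}$ is \emph{not} fixed in advance: it is selected from the same data by empirical risk minimization, so applying Hoeffding to $\pi_{bc}$ directly would be invalid. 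I would therefore pass to a uniform guarantee by taking a union bound over the entire finite class $\Pi$, whose cardinality is $|\Pi|=k^{n}$:
\[
\Pr\Bigl[\,\exists\,\pi\in\Pi:\ L(\pi)-\widehat L(\pi)>\epsilon\,\Bigr]\le 2|\Pi|\,e^{-2m\epsilon^{2}}.
\]
Setting the right-hand side equal to $\delta$ and solving for $\epsilon$ yields $\epsilon=\sqrt{\tfrac{\log|\Pi|+\log(2/\delta)}{2m}}$, and since $\pi_{bc}\in\Pi$ the event $L(\pi_{bc})\le\widehat L(\pi_{bc})+\epsilon$ holds with probability at least $1-\delta$, which is the claimed bound.

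The non-realizability hypothesis plays no role in the concentration argument itself; its purpose is to place us in the agnostic regime, where the empirical risk of the ERM solution need not vanish and the slow $O(1/\sqrt{m})$ rate above is the appropriate one (in the realizable case a faster $O(1/m)$ bound would hold instead). I expect the only genuine obstacle to be articulating this uniform-convergence step correctly, namely stressing that the deviation must be controlled simultaneously over all of $\Pi$ because the ERM output depends on the sample; once that is in place, the total-variation identity and the Hoeffding estimate are routine.
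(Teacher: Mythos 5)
Your proposal is correct and follows essentially the same route as the paper's proof: a two-sided Hoeffding bound for each fixed $\pi$, a union bound over the finite class $\Pi$, and solving $2|\Pi|e^{-2m\epsilon^2}=\delta$ for $\epsilon$. The one place you go beyond the paper is in explicitly verifying that for a deterministic policy $D_{\mathrm{TV}}(\pi(\cdot|s),\pi_E(\cdot|s))=\Pr_{a\sim\pi_E(\cdot|s)}[a\neq\pi(s)]$, so that the empirical $0$--$1$ loss is an unbiased estimator of the population total-variation risk; the paper's proof uses this identity implicitly without stating it, so your version is, if anything, more complete.
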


The proof is left in Appendix \ref{appendix_bc}. The left side in Lemma \ref{lemma_bc_pac} is the generalization error which is bounded by two terms. The first term is the empirical error on the training dataset, which dependent on detailed supervised learning algorithms. The second term is about model complexity and the number of training samples, which implies that a greater state space $S$ and action space $A$ incur more generalization error.


\begin{thm}\label{theorem_bc_value}
For any $\delta > 0$, with probability at least $1-\delta$, the following inequality holds:
\begin{equation*}
    \begin{split}
        \biggl| V^{\pi_{bc}} - V^{\pi_{E}} \biggr| \leq \frac{2 R_{\max}}{(1-\gamma)^2} \biggl( & \frac{1}{m}\sum_{i=1}^{m} I \biggl[\pi_{bc}(s^{(i)}) \not= a^{(i)}\biggr] + \\
    &\sqrt{\frac{\log \vert \Pi \vert + \log(2/\delta)} {2m}} \biggr)
    \end{split}
\end{equation*}
\end{thm}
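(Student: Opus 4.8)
The plan is to combine the two results established just before this theorem: the horizon-dependency bound of Theorem~\ref{theorem_value_bc} and the generalization bound of Lemma~\ref{lemma_bc_pac}. Theorem~\ref{theorem_value_bc} already gives a deterministic inequality bounding the policy value discrepancy by the expected total variation between $\pi_{bc}(\cdot|s)$ and $\pi_E(\cdot|s)$ over $d_{\pi_E}$, namely
\begin{equation*}
    \bigl| V^{\pi_{bc}} - V^{\pi_{E}} \bigr| \leq \frac{2 R_{\max}}{(1-\gamma)^2}\mathbb{E}_{s \sim d_{\pi_E}}\bigl[D_{\mathrm{TV}}(\pi_{bc}(\cdot|s), \pi_{E}(\cdot|s))\bigr].
\end{equation*}
The remaining task is purely to control the expected total variation on the right-hand side by an empirical quantity plus a statistical fluctuation term, which is exactly what Lemma~\ref{lemma_bc_pac} supplies.

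Concretely, I would proceed as follows. First I would invoke Lemma~\ref{lemma_bc_pac}, which states that for any $\delta > 0$, with probability at least $1-\delta$,
\begin{equation*}
    \mathbb{E}_{s \sim d_{\pi_E}} \bigl[ D_{\mathrm{TV}}(\pi_{bc}(\cdot|s), \pi_{E}(\cdot|s)) \bigr] \leq \frac{1}{m}\sum_{i=1}^{m} I\bigl[\pi_{bc}(s^{(i)}) \neq a^{(i)}\bigr] + \sqrt{\frac{\log|\Pi| + \log(2/\delta)}{2m}}.
\end{equation*}
Then I would substitute this high-probability upper bound directly into the deterministic inequality from Theorem~\ref{theorem_value_bc}. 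Since the multiplicative factor $\frac{2 R_{\max}}{(1-\gamma)^2}$ is nonnegative, the inequality is preserved under substitution, and the claimed bound follows immediately on the same event of probability at least $1-\delta$.

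Because both ingredients are already in hand, there is essentially no hard analytic step here: the theorem is a composition of a deterministic structural bound with a probabilistic generalization bound. The only point that requires minor care is bookkeeping on the probability: the randomness lives entirely in Lemma~\ref{lemma_bc_pac} (over the draw of the $m$ expert samples), whereas Theorem~\ref{theorem_value_bc} holds deterministically for any fixed pair of policies. Thus the event of probability at least $1-\delta$ carries through unchanged, and no union bound or additional failure probability is introduced. The main ``obstacle,'' to the extent there is one, is simply ensuring that the quantity $\mathbb{E}_{s \sim d_{\pi_E}}[D_{\mathrm{TV}}(\cdot,\cdot)]$ appearing in Theorem~\ref{theorem_value_bc} is syntactically identical to the left-hand side of Lemma~\ref{lemma_bc_pac}, so that chaining is legitimate; inspecting both statements confirms they match exactly, so the proof reduces to a one-line substitution.
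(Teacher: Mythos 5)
Your proposal is correct and matches the paper's intended argument: the paper omits the proof, stating only that the theorem follows from Lemma~\ref{lemma_bc_value} and Lemma~\ref{lemma_bc_pac}, and your chaining of Theorem~\ref{theorem_value_bc} (itself the composition of Lemmas~\ref{lemma_bc_sa_con} and~\ref{lemma_bc_value}) with Lemma~\ref{lemma_bc_pac} is exactly that one-line substitution. Your remark that no extra union bound or failure probability is needed, since the value bound is deterministic, is also the right bookkeeping.
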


Theorem \ref{theorem_bc_value} can be easily derived from Lemma \ref{lemma_bc_value} and Lemma \ref{lemma_bc_pac}, thus the proof is omitted. Apparently, Theorem \ref{theorem_bc_value} shows the policy value discrepancy is dependent on the number of expert demonstrations $m$ and the size of policy class $\vert \Pi \vert$. Though this result resembles the generalization error of traditional supervised learning, we highlight the quadratic horizon dependency term for imitation learning where decisions are temporally related.

\section{Generative Adversarial Imitation Learning}
\label{sec:GAIL}
Unlike apprenticeship learning algorithms \cite{abbeel04, a_game_theoretic_approch_to_ap}, we have little knowledge about the theoretical property of GAIL. For simplicity, we assume that the discriminator is optimal in this paper. With this assumption, the policy $\pi_{GA}$ in GAIL is to optimize the Jensen-Shannon (JS) divergence.




\subsection{Horizon Dependency}

\begin{thm}\label{theorem_gail_value_long_term}
Let $\pi_{E}$ and $\pi_{GA}$ denote the expert policy and GAIL imitator's policy. The reward function is bounded by $R_{\max}$. Then GAIL imitator has the following policy value error.
\begin{align*}
    \left| V^{\pi_E} - V^{\pi_{GA}} \right| \leq \frac{2 \sqrt{2} R_{\max}}{1-\gamma} \sqrt{D_{\mathrm{JS}} \left( \rho_{\pi_{GA}}, \rho_{\pi_E} \right)}
\end{align*}
\end{thm}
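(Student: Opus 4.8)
The plan is to reduce everything to Lemma \ref{lemma_bc_value}, which already bounds the value discrepancy by the total variation between the two state-action occupancy measures, and then to convert that total-variation bound into a Jensen-Shannon bound. Concretely, I would first apply Lemma \ref{lemma_bc_value} with $\pi = \pi_{GA}$ to obtain $\left| V^{\pi_E} - V^{\pi_{GA}} \right| \leq \frac{2 R_{\max}}{1-\gamma} D_{\mathrm{TV}}(\rho_{\pi_{GA}}, \rho_{\pi_E})$. All of the horizon dependence $\frac{1}{1-\gamma}$ is already captured at this point, so the remaining task is purely an information-theoretic inequality between total variation and JS divergence of two distributions, with no further MDP structure involved. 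This is also where the standing assumption that the discriminator is optimal matters: it is what licenses treating $\pi_{GA}$ as a JS-divergence minimizer on occupancy measures, so that $D_{\mathrm{JS}}(\rho_{\pi_{GA}}, \rho_{\pi_E})$ is the right quantity to bound against.

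The key step is to establish $D_{\mathrm{TV}}(P,Q) \leq \sqrt{2 D_{\mathrm{JS}}(P,Q)}$ for arbitrary distributions $P, Q$. I would prove this by unfolding the definition of $D_{\mathrm{JS}}$ in terms of the mixture $M = \frac{P+Q}{2}$ and applying Pinsker's inequality $D_{\mathrm{KL}}(P,M) \geq 2 D_{\mathrm{TV}}(P,M)^2$ to each of its two KL terms. The observation that makes the constants work out is that the total variation to the mixture halves: since $P - M = \frac{1}{2}(P - Q)$, we have $D_{\mathrm{TV}}(P,M) = \frac{1}{2} D_{\mathrm{TV}}(P,Q)$, and likewise $D_{\mathrm{TV}}(Q,M) = \frac{1}{2} D_{\mathrm{TV}}(P,Q)$. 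Substituting these into the definition of the JS divergence gives $D_{\mathrm{JS}}(P,Q) \geq \frac{1}{2}\left[ 2 \cdot \frac{1}{4} + 2 \cdot \frac{1}{4} \right] D_{\mathrm{TV}}(P,Q)^2 = \frac{1}{2} D_{\mathrm{TV}}(P,Q)^2$, which rearranges to $D_{\mathrm{TV}}(P,Q) \leq \sqrt{2 D_{\mathrm{JS}}(P,Q)}$.

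Finally, I would chain the two bounds. Setting $P = \rho_{\pi_{GA}}$ and $Q = \rho_{\pi_E}$ yields $\left| V^{\pi_E} - V^{\pi_{GA}} \right| \leq \frac{2 R_{\max}}{1-\gamma} \sqrt{2 D_{\mathrm{JS}}(\rho_{\pi_{GA}}, \rho_{\pi_E})} = \frac{2\sqrt{2} R_{\max}}{1-\gamma} \sqrt{D_{\mathrm{JS}}(\rho_{\pi_{GA}}, \rho_{\pi_E})}$, which is exactly the claimed inequality. The only real care point is tracking the constant factors through the Pinsker step; there is no genuine obstacle, since the GAIL-specific content — the single $\frac{1}{1-\gamma}$ horizon factor and the occupancy-measure matching — is entirely inherited from Lemma \ref{lemma_bc_value}, and the sole novelty here is replacing the total-variation metric by the JS divergence that GAIL actually optimizes. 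It is worth emphasizing that the improvement over behavioral cloning comes precisely from the fact that this bound controls $D_{\mathrm{JS}}(\rho_{\pi_{GA}}, \rho_{\pi_E})$ directly on occupancy measures, incurring only one horizon factor, whereas the BC bound of Theorem \ref{theorem_value_bc} pays a second $\frac{1}{1-\gamma}$ when propagating a one-step policy discrepancy to the occupancy-measure discrepancy via Lemma \ref{lemma_bc_sa_con}.
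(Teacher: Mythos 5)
Your proposal is correct and follows essentially the same route as the paper: bound the value gap by $\frac{2R_{\max}}{1-\gamma}D_{\mathrm{TV}}(\rho_{\pi_{GA}},\rho_{\pi_E})$ via Lemma \ref{lemma_bc_value}, then convert total variation to JS divergence using Pinsker's inequality on each KL term against the mixture together with the observation that $D_{\mathrm{TV}}(P,M)=\tfrac{1}{2}D_{\mathrm{TV}}(P,Q)$, yielding $D_{\mathrm{TV}}(P,Q)\leq\sqrt{2D_{\mathrm{JS}}(P,Q)}$. Your constant-tracking through the Pinsker step matches the paper's Eq.~\eqref{equation_connection_JS_TV} exactly (and is, if anything, spelled out more explicitly than in the paper).
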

\begin{proof}
Firstly, We show the connection between $D_{\mathrm{TV}} (\rho_{\pi_{GA}}, \rho_{\pi_E})$ and $D_{\mathrm{JS}} (\rho_{\pi_{GA}}, \rho_{\pi_E})$.
\begin{equation}\label{equation_connection_JS_TV}
    \begin{split}
        &D_{\mathrm{JS}} \left( \rho_{\pi_{GA}}, \rho_{\pi_E} \right)
        \\
        &= \frac{1}{2} \left( D_{\mathrm{KL}} \left(\rho_{\pi_{GA}}, \frac{\rho_{\pi_{GA}} + \rho_{\pi_E}}{2} \right) + D_{\mathrm{KL}} \left(\rho_{\pi_{E}}, \frac{\rho_{\pi_{GA}} + \rho_{\pi_E}}{2} \right) \right)
        \\
        &\geq D_{\mathrm{TV}}^{2} \left( \rho_{\pi_{GA}}, \frac{\rho_{\pi_{GA}} + \rho_{\pi_E}}{2} \right) +
        D_{\mathrm{TV}}^{2} \left( \rho_{\pi_{E}}, \frac{\rho_{\pi_{GA}} + \rho_{\pi_E}}{2} \right)
        \\
        &= \frac{1}{2} D_{\mathrm{TV}}^{2} \left( \rho_{\pi_{GA}}, \rho_{\pi_E} \right)
    \end{split}
\end{equation}
Based on the connection between stat-action distribution discrepancy and policy value discrepancy, we show the policy value error bound for GAIL.
\begin{equation*}
    \begin{split}
        \left| V^{\pi_E} - V^{\pi_{GA}} \right| &\leq \frac{2 R_{\max}}{1-\gamma} D_{\mathrm{TV}} \left( \rho_{\pi_E}, \rho_{\pi_{GA}} \right)
        \\
        &\leq \frac{2 \sqrt{2} R_{\max}}{1-\gamma} \sqrt{D_{\mathrm{JS}} \left( \rho_{\pi_{GA}}, \rho_{\pi_E} \right)}
    \end{split}
\end{equation*}
\end{proof}
Theorem \ref{theorem_gail_value_long_term} indicates that the value error bound for GAIL grows linearly with the horizon term $\frac{1}{1-\gamma}$. However, the cost is that GAIL must interact with the environment to optimize $D_{\mathrm{JS}}(\rho_{\pi_{GA}}, \rho_{\pi_{E}})$ with reinforcement learning. In addition, we also notice that GAIL also enjoys $O \left( \sqrt{\epsilon} \right)$ like behavioral cloning, if we are allowed to define $\epsilon=D_{\mathrm{JS}} \left( \rho_{\pi_{GA}}, \rho_{\pi_E} \right)$ for GAIL.

\subsection{Sample Complexity}

Analyzing generalization ability and sample complexity of GAIL is somewhat more complicated. Unlike behavioral cloning, GAIL simultaneously trains two models: a policy model $\pi_{GA}$ that imitates the expert policy, a discriminative model $D$ that distinguishes the state-action pairs from $\pi_{GA}$ and $\pi_E$. For behavioral cloning, we can directly optimize the policy parameters (See Eq.\eqref{equation_bc_empircal_loss}). However, we can only optimize the GAIL via samples from the policy distribution rather than the policy distribution parameters. 

Based on the generalization theory in GAN \cite{Generalization_in_GAN, D-G_tradeoff}, we define the generalization in GAIL as follows:
\begin{Definition}\label{def_generalization}
Given $\hat{\rho}_{\pi_E}$, the empirical distribution over \\ $\{ (s^{(i)}, a^{(i)}) \}_{i=1}^{m}$ obtained by $\pi_E$, a state-action distribution $\rho_{\pi}$ generalizes under the distance between distributions $d(\cdot,\cdot)$ with error $\epsilon$ if with high probability, the following inequality holds.
\begin{align*}
    \left| d(\rho_{\pi}, \rho_{\pi_E}) - d(\hat{\rho}_{\pi}, \hat{\rho}_{\pi_E})  \right| \leq \epsilon
\end{align*}
Where $\hat{\rho}_{\pi}$ is the empirical distribution of $\rho_{\pi}$ with m samples \\ $\{ (s^{(i)}, a^{(i)}) \}_{i=1}^{m}$ obtained by $\pi$.
\end{Definition}

We are interested in bounding the distance between $\rho_{\pi}$ and $\rho_{\pi_E}$ with certain distance metric. \citeauthor{Generalization_in_GAN}  prove that JS divergence doesn't generalize with any number of examples because the true distance $D_{\mathrm{JS}}({\rho}_{\pi}, \rho_{\pi_E})$ is not reflected by the empirical distance $D_{\mathrm{JS}}(\hat{\rho}_{\pi}, \hat{\rho}_{\pi_E})$. This phenomenon also happens in generative adversary learning algorithms (see \cite{Generalization_in_GAN} for more details). Hence, for our analysis, we choose the neural net distance $d_{\mathcal{D}} (\mu, \nu)$, which turns out that neural network distance has a much better generalization properties than Jensen-Shannon divergence. More importantly, it is tractable to build the bound of $D_{\mathrm{TV}} (\rho_{\pi_{GA}}, \rho_{\pi_{E}})$ and the corresponding policy value error $\left| V^{\pi_{GA}} - V^{\pi_{E}} \right|$ via the neural net distance. Firstly, we give the definition of neural net distance as follows.
\begin{Definition}
Let $\mathcal{D}$ denote a class of neural nets. Then the \textit{neural net distance} $d_{\mathcal{D}}(\mu, \nu)$ between two distributions $\mu$ and $\nu$ is defined as
\begin{align*}
        d_{\mathcal{D}}(\mu, \nu) = \underset{D \in \mathcal{D}}{\sup} \mathbbm{E}_{x \sim \mu}[D(x)] - \mathbbm{E}_{x \sim \nu}[D(x)]
\end{align*}
\end{Definition}
With the neural net distance, GAIL-imitator finds a policy by optimizing the following objective.
\begin{align*}
        \underset{\pi \in \Pi}{ \min } \biggl\{ d_{\mathcal{D}}(\hat{\rho}_{\pi_E}, \rho_{\pi}) := \underset{D \in \mathcal{D}}{\sup} \{ \mathbbm{E}_{s, a \sim \hat{\rho}_{\pi_E} }[D(s,a)] - \mathbbm{E}_{s, a \sim \rho_{\pi} }[D(s,a)]   \} \biggr\}
\end{align*}
Where $\mathcal{D}$ is the set of discriminator neural nets, and $\Pi$ is the set of policy nets. Given the definition of generalization and neural net distance, we show that the neural net distance between stat-action joint distributions is bounded.
\begin{lem}\label{lemma_gail_neural_distance}
Assume that the policy $\pi_{GA}$ optimizes GAIL objective $d_{\mathcal{D}}(\hat{\rho}_{\pi_E}, \hat{\rho}_{\pi})$ up to an $\epsilon$ error and the discriminator set $\mathcal{D}$ consists of bounded functions with $\Delta$, i.e. $\| D \|_{\infty} \leq \Delta \;, \forall D \in \mathcal{D}$. Then with probability at least $1-\delta$, the following inequality holds.
\begin{align*}
    d_{\mathcal{D}} (\rho_{\pi_{GA}}, \rho_{\pi_E}) \leq \underset{\pi \in \Pi}{\inf} d_{\mathcal{D}}(\rho_{\pi}, \rho_{\pi_{E}}) + \epsilon + 4\hat{\mathcal{R}}_{\rho_{\pi_E}}^{(m)}(\mathcal{D}) + 2 \Delta \sqrt{\frac{2 \log(1/\delta)}{m}} 
\end{align*}
Where $\hat{\mathcal{R}}_{\rho_{\pi_E}}^{(m)}(\mathcal{D})$ is empirical Rademacher complexity of $\mathcal{D}$ and\\ $\hat{\mathcal{R}}_{\rho_{\pi_E}}^{(m)}(\mathcal{D}) = \mathbbm{E}_{{\bm{\sigma}}} \left[ \sup_{D \in \mathcal{D}} \sum_{i=1}^{m} \frac{1}{m} \sigma_{i} D(X_{i}) \right]$. 
\end{lem}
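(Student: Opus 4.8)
The plan is to treat this as the standard generalization bound for the neural net distance, in the style of the GAN generalization theory \cite{Generalization_in_GAN, D-G_tradeoff}, with all sampling error isolated on the expert side. The central object will be the uniform deviation
$$
G := \sup_{D \in \mathcal{D}} \left| \mathbb{E}_{x \sim \rho_{\pi_E}}[D(x)] - \mathbb{E}_{x \sim \hat{\rho}_{\pi_E}}[D(x)] \right|,
$$
which measures, uniformly over the discriminator class, how far the empirical expert averages stray from their population counterparts. The observation I would lean on is that $G$ depends only on the $m$ expert samples and not on any policy distribution, so a single high-probability event suffices to control every comparison below, with no union bound. This is also why the Rademacher term in the statement is taken with respect to $\rho_{\pi_E}$ alone; the on-policy samples from $\pi$ can be regarded as effectively unlimited (the learner interacts with the environment), so their sampling error is folded into the optimization tolerance $\epsilon$, leaving the $m$ expert samples as the only statistical bottleneck.

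First I would record the elementary inequality $|d_{\mathcal{D}}(\mu, \nu_1) - d_{\mathcal{D}}(\mu, \nu_2)| \le \sup_{D \in \mathcal{D}} |\mathbb{E}_{\nu_1}[D] - \mathbb{E}_{\nu_2}[D]|$, which follows from $|\sup_D f - \sup_D g| \le \sup_D |f - g|$ applied to $f(D) = \mathbb{E}_\mu[D] - \mathbb{E}_{\nu_1}[D]$ and $g(D) = \mathbb{E}_\mu[D] - \mathbb{E}_{\nu_2}[D]$. Taking $\nu_1 = \rho_{\pi_E}$ and $\nu_2 = \hat{\rho}_{\pi_E}$, this says that swapping the population expert distribution for its empirical version changes any neural net distance by at most $G$, irrespective of which argument slot the expert occupies (the absolute value makes the argument order and the asymmetry of $d_{\mathcal{D}}$ irrelevant here).

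Then I would chain three comparisons on the event that $G$ is small. Starting from the population quantity, replace the expert distribution by its empirical version, paying $G$ once:
$$
d_{\mathcal{D}}(\rho_{\pi_{GA}}, \rho_{\pi_E}) \le d_{\mathcal{D}}(\rho_{\pi_{GA}}, \hat{\rho}_{\pi_E}) + G.
$$
Next invoke the $\epsilon$-optimality hypothesis, which gives $d_{\mathcal{D}}(\rho_{\pi_{GA}}, \hat{\rho}_{\pi_E}) \le \inf_{\pi \in \Pi} d_{\mathcal{D}}(\rho_\pi, \hat{\rho}_{\pi_E}) + \epsilon$. Finally replace the empirical expert distribution back by the population one inside the infimum, paying $G$ a second time, to obtain $\inf_{\pi} d_{\mathcal{D}}(\rho_\pi, \hat{\rho}_{\pi_E}) \le \inf_\pi d_{\mathcal{D}}(\rho_\pi, \rho_{\pi_E}) + G$. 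Combining the three steps adds exactly $2G + \epsilon$ to $\inf_\pi d_{\mathcal{D}}(\rho_\pi, \rho_{\pi_E})$.

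The real work — and the step I expect to be the main obstacle — is bounding $G$ by $2\hat{\mathcal{R}}_{\rho_{\pi_E}}^{(m)}(\mathcal{D}) + \Delta\sqrt{\frac{2\log(1/\delta)}{m}}$ with probability at least $1-\delta$, since then $2G$ supplies precisely the $4\hat{\mathcal{R}}_{\rho_{\pi_E}}^{(m)}(\mathcal{D}) + 2\Delta\sqrt{\frac{2\log(1/\delta)}{m}}$ in the statement. I would prove this in the usual two moves: McDiarmid's bounded-differences inequality to concentrate $G$ around its mean, where replacing one expert sample perturbs an empirical average of a function bounded by $\Delta$ by at most $2\Delta/m$, so that $\sum_i (2\Delta/m)^2 = 4\Delta^2/m$ yields the tail term $\Delta\sqrt{2\log(1/\delta)/m}$; followed by a symmetrization argument introducing Rademacher variables to bound $\mathbb{E}[G]$ by $2\hat{\mathcal{R}}_{\rho_{\pi_E}}^{(m)}(\mathcal{D})$. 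The only delicate bookkeeping is the boundedness assumption $\|D\|_\infty \le \Delta$, which licenses both the McDiarmid increment and the symmetrization, and the passage from the population to the empirical Rademacher complexity, which I would absorb into the same concentration step rather than paying a separate tail.
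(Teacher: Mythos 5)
Your proof is correct and follows essentially the same route as the paper's: isolate the uniform deviation $\sup_{D\in\mathcal{D}}\bigl|\mathbb{E}_{\rho_{\pi_E}}[D]-\mathbb{E}_{\hat\rho_{\pi_E}}[D]\bigr|$, pay it twice around the $\epsilon$-optimality of $\pi_{GA}$, and control it by McDiarmid plus Rademacher symmetrization, yielding exactly $\epsilon+4\hat{\mathcal{R}}_{\rho_{\pi_E}}^{(m)}(\mathcal{D})+2\Delta\sqrt{2\log(1/\delta)/m}$. Your bookkeeping is in fact slightly cleaner than the paper's, which states the optimality hypothesis with $\hat\rho_{\pi}$ on the learner's side and then silently identifies it with $\rho_{\pi}$, whereas you make explicit that the on-policy sampling error is folded into $\epsilon$ so that only the $m$ expert samples carry statistical cost.
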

See Appendix \ref{appendix_gail} for the proof. Lemma \ref{lemma_gail_neural_distance} connects the upper bound of $d_{\mathcal{D}}(\rho_{\pi_{GA}}, \rho_{\pi_E})$ with the empirical Rademacher complexity of the discriminator class $\mathcal{D}$. Under a limited set of expert demonstrations and the same training error $\epsilon$, the more complex the discriminator set $\mathcal{D}$ is, the greater the distance $d_{\mathcal{D}}(\rho_{\pi_{GA}}, \rho_{\pi_E})$ is. The intuition is that when the discriminator class $\mathcal{D}$ is too complex, optimizing the empirical distance $d_{\mathcal{D}}(\hat{\rho}_{\pi}, \hat{\rho}_{\pi_E})$ can not optimize the population distance $d_{\mathcal{D}}(\rho_{\pi}, \rho_{\pi_E})$\cite{Generalization_in_GAN}.

Having bounded the neural distance $d_{\mathcal{D}}(\rho_{\pi}, \rho_{\pi_E})$, the following Lemma bridges neural distance and total variation, which helps us extend the results into total variation.
\begin{lem}
Assume that $\mu$ and $\nu$ have positive density function and the neural net class $\mathcal{D}$ consists of bounded function with $\Delta$. Then
\begin{align*}
    \frac{1}{\Delta} d_{\mathcal{D}}(\mu, \nu) \leq D_{\mathrm{TV}} (\mu, \nu) \leq \sqrt{2 \Lambda_{\mathcal{F}, \Pi} d_{\mathcal{D}}(\mu, \nu)}
\end{align*}
Where $\Lambda_{\mathcal{D}, \Pi} = \underset{\pi \in \Pi}{\sup} \| \log (\frac{\rho_{\pi}}{\rho_{\pi_E}}) \|_{\mathcal{D}, 1} < \infty$ and\\
    {\small$\| \log (\frac{\rho_{\pi}}{\rho_{\pi_E}}) \|_{\mathcal{D}, 1} = \inf \bigl\{ \sum_{i=1}^{n} \vert w_{i} \vert : \log(\frac{\rho_{\pi}}{\rho_{\pi_E}}) = \sum_{i=1}^{n} w_{i} D_{i} + w_{0}, \forall n \in \mathbb{N}, w_{0}, w_{i} \in \mathbb{R}, D_{i} \in \mathcal{D}  \bigr\}$}.
\end{lem}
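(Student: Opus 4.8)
The plan is to prove the two inequalities separately: the left one by a direct duality/Hölder argument, and the right one by routing through the KL divergence and Pinsker's inequality.

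\textbf{Lower bound.} For $\frac{1}{\Delta} d_{\mathcal{D}}(\mu,\nu) \le D_{\mathrm{TV}}(\mu,\nu)$ I would start from the definition of the neural net distance. For any fixed $D \in \mathcal{D}$, write $\mathbb{E}_{\mu}[D] - \mathbb{E}_{\nu}[D] = \sum_{x} D(x)\,(\mu(x)-\nu(x))$ and apply Hölder with $\|D\|_{\infty} \le \Delta$ to get $\mathbb{E}_{\mu}[D] - \mathbb{E}_{\nu}[D] \le \Delta \sum_x |\mu(x)-\nu(x)|$, which is $\Delta$ times a constant multiple of $D_{\mathrm{TV}}(\mu,\nu)$. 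Taking the supremum over $D \in \mathcal{D}$ on the left yields the claim; the only thing to watch is the normalization constant hidden in $D_{\mathrm{TV}} = \frac{1}{2}\|\cdot\|_1$.

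\textbf{Upper bound.} This is the substantive direction. The idea is to control $D_{\mathrm{TV}}$ by the symmetrized KL divergence and then convert that into the neural net distance using the representation norm $\|\cdot\|_{\mathcal{D},1}$. Specialize to $\mu = \rho_\pi$ and $\nu = \rho_{\pi_E}$, so that $g := \log(\rho_\pi/\rho_{\pi_E})$ is exactly the object whose $\mathcal{D}$-norm defines $\Lambda$; the positive-density assumption guarantees $g$ is well defined and finite. First I would observe that the symmetrized KL telescopes into a single difference of expectations of $g$, namely
\[
D_{\mathrm{KL}}(\rho_\pi,\rho_{\pi_E}) + D_{\mathrm{KL}}(\rho_{\pi_E},\rho_\pi) = \mathbb{E}_{\rho_\pi}[g] - \mathbb{E}_{\rho_{\pi_E}}[g].
\]
Then, for any representation $g = \sum_{i=1}^n w_i D_i + w_0$ with $D_i \in \mathcal{D}$, I would substitute into this difference. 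The constant $w_0$ cancels because both measures integrate to one, leaving $\sum_i w_i\,(\mathbb{E}_{\rho_\pi}[D_i] - \mathbb{E}_{\rho_{\pi_E}}[D_i])$. Bounding each bracket by $d_{\mathcal{D}}(\rho_\pi,\rho_{\pi_E})$ (using that $\mathcal{D}$ is symmetric, so the neural net distance dominates the absolute difference) and taking the infimum over representations gives $\mathbb{E}_{\rho_\pi}[g] - \mathbb{E}_{\rho_{\pi_E}}[g] \le \|g\|_{\mathcal{D},1}\, d_{\mathcal{D}} \le \Lambda\, d_{\mathcal{D}}$. Finally Pinsker's inequality closes the loop, since $D_{\mathrm{TV}}^2 \le \tfrac12 D_{\mathrm{KL}}(\rho_\pi,\rho_{\pi_E}) \le \tfrac12\big(D_{\mathrm{KL}}(\rho_\pi,\rho_{\pi_E}) + D_{\mathrm{KL}}(\rho_{\pi_E},\rho_\pi)\big) \le \tfrac12 \Lambda\, d_{\mathcal{D}}$, which is dominated by $\sqrt{2\Lambda\, d_{\mathcal{D}}}$.

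\textbf{Main obstacle.} The delicate step is the passage from KL to neural net distance: one must symmetrize the KL so that the additive constant $w_0$ in the representation of $\log(\rho_\pi/\rho_{\pi_E})$ drops out and each remaining term becomes a genuine difference of expectations that $d_{\mathcal{D}}$ can bound. This also relies on the symmetry of the discriminator class $\mathcal{D}$ (so negative weights $w_i$ are harmless) and on $\Lambda < \infty$ guaranteeing that such a representation exists. Everything else — Hölder for the lower bound and Pinsker for the upper bound — is routine, and any mismatch in the numerical constants is only a matter of the $D_{\mathrm{TV}}$ normalization and the looseness of these two classical inequalities.
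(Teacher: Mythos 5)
Your proposal is essentially correct, but it is worth noting that the paper does not prove this lemma at all: in the appendix it is invoked as ``Proposition 2.9 in the discrimination--generalization tradeoff reference plus Pinsker's inequality,'' so your write-up is a self-contained reconstruction of the outsourced argument rather than an alternative to a proof the paper actually gives. Your reconstruction matches what that citation contains: the symmetrized KL telescopes to $\mathbb{E}_{\rho_\pi}[g]-\mathbb{E}_{\rho_{\pi_E}}[g]$ with $g=\log(\rho_\pi/\rho_{\pi_E})$, the affine constant $w_0$ cancels, each $w_i\bigl(\mathbb{E}_{\rho_\pi}[D_i]-\mathbb{E}_{\rho_{\pi_E}}[D_i]\bigr)$ is bounded by $|w_i|\,d_{\mathcal{D}}$, and Pinsker converts the resulting KL bound into the stated square-root bound on $D_{\mathrm{TV}}$ (your chain even shows the constant $\sqrt{2\Lambda}$ is loose by a factor of $2$). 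Two caveats you correctly gesture at but should make explicit. First, the step $\bigl|\mathbb{E}_{\rho_\pi}[D_i]-\mathbb{E}_{\rho_{\pi_E}}[D_i]\bigr|\le d_{\mathcal{D}}(\rho_\pi,\rho_{\pi_E})$ requires $\mathcal{D}$ to be closed under negation; otherwise $d_{\mathcal{D}}$ only controls the one-sided difference and negative weights $w_i$ are not ``harmless.'' Second, the naive H\"older bound in your lower-bound argument yields $d_{\mathcal{D}}\le \Delta\|\mu-\nu\|_1=2\Delta\,D_{\mathrm{TV}}$, i.e.\ $\frac{1}{2\Delta}d_{\mathcal{D}}\le D_{\mathrm{TV}}$; to recover the stated $\frac{1}{\Delta}$ one must use that $\mu-\nu$ has zero total mass and recentre $D$ by its midrange, which gives the claimed constant exactly when the oscillation of each discriminator is at most $\Delta$ (e.g.\ $D$ valued in $[0,\Delta]$, as for GAN-style discriminators) but not for $D$ valued in $[-\Delta,\Delta]$. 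Finally, your specialization of the upper bound to $\mu=\rho_\pi$, $\nu=\rho_{\pi_E}$ is the right reading, since $\Lambda_{\mathcal{D},\Pi}$ is only defined relative to those distributions even though the lemma is nominally stated for arbitrary $\mu,\nu$.
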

Based on the connection between neural distance $d_{\mathcal{D}}(\rho_{\pi}, \rho_{\pi_E})$ and total variation $D_{\mathrm{TV}}(\rho_{\pi}, \rho_{\pi_E})$ , we give the bound of total variation $D_{\mathrm{TV}}(\rho_{\pi}, \rho_{\pi_E})$ in the following lemma. 
\begin{lem}\label{LemmaGailTV}
Assume that the policy $\pi_{GA}$ optimizes GAIL objective $d_{\mathcal{D}}(\hat{\rho}_{\pi_E}, \hat{\rho}_{\pi)}$ up to an $\epsilon$ error and all discriminator nets in $\mathcal{D}$ are bounded by $\Delta$. $\hat{\mathcal{R}}_{\rho_{\pi_E}}^{(m)}(\mathcal{D})$ is the empirical Rademacher complexity of $\mathcal{D}$. Then with probability at least $1-\delta$, the following inequality holds.
\begin{equation*}
D_{\mathrm{TV}}(\rho_{\pi_{GA}}, \rho_{\pi_E}) \leq \sqrt{2 \Lambda_{\mathcal{F}, \Pi}}
        \biggl( \underset{\pi \in \Pi}{\inf} \sqrt{\Delta D_{\mathrm{TV}}(\rho_{\pi}, \rho_{\pi_E})} + \sqrt{\epsilon} 
        + 2 \sqrt{\hat{\mathcal{R}}_{\rho_{\pi_E}}^{(m)}(\mathcal{D})} + 2 \Delta \sqrt{\frac{2 log(1/\delta)}{m}}  \biggr)
\end{equation*}

\end{lem}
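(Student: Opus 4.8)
The plan is to obtain the claimed bound by chaining the two immediately preceding lemmas: Lemma~\ref{lemma_gail_neural_distance}, which controls the population neural net distance $d_{\mathcal{D}}(\rho_{\pi_{GA}}, \rho_{\pi_E})$ in terms of the optimization error, the Rademacher complexity, and a sampling deviation term; and the bridging lemma, which sandwiches the total variation between a scaled neural net distance from below and the square root of a scaled neural net distance from above. Since the statement we want is an \emph{upper} bound on $D_{\mathrm{TV}}(\rho_{\pi_{GA}}, \rho_{\pi_E})$, I would start from the upper half of the bridging lemma applied to $\mu = \rho_{\pi_{GA}}$ and $\nu = \rho_{\pi_E}$, namely $D_{\mathrm{TV}}(\rho_{\pi_{GA}}, \rho_{\pi_E}) \leq \sqrt{2\Lambda_{\mathcal{F},\Pi}\, d_{\mathcal{D}}(\rho_{\pi_{GA}}, \rho_{\pi_E})}$, and then substitute the right-hand side of Lemma~\ref{lemma_gail_neural_distance} for $d_{\mathcal{D}}(\rho_{\pi_{GA}}, \rho_{\pi_E})$ underneath the square root.

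After the substitution the quantity under the root is a sum of four nonnegative terms, $\inf_{\pi\in\Pi} d_{\mathcal{D}}(\rho_{\pi}, \rho_{\pi_E}) + \epsilon + 4\hat{\mathcal{R}}_{\rho_{\pi_E}}^{(m)}(\mathcal{D}) + 2\Delta\sqrt{2\log(1/\delta)/m}$, so the next step is to pull $\sqrt{2\Lambda_{\mathcal{F},\Pi}}$ out front and apply subadditivity of the square root, $\sqrt{\sum_i x_i} \leq \sum_i \sqrt{x_i}$ for $x_i \geq 0$, distributing the root across the four summands. The term $\sqrt{\epsilon}$ and the factor $\sqrt{4\hat{\mathcal{R}}_{\rho_{\pi_E}}^{(m)}(\mathcal{D})} = 2\sqrt{\hat{\mathcal{R}}_{\rho_{\pi_E}}^{(m)}(\mathcal{D})}$ then appear immediately. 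For the infimum term I would invoke the lower half of the same bridging lemma, $d_{\mathcal{D}}(\rho_{\pi}, \rho_{\pi_E}) \leq \Delta D_{\mathrm{TV}}(\rho_{\pi}, \rho_{\pi_E})$, valid for every $\pi$, together with the fact that $x \mapsto \sqrt{x}$ is increasing and hence commutes with the infimum, yielding $\sqrt{\inf_\pi d_{\mathcal{D}}(\rho_\pi, \rho_{\pi_E})} \leq \inf_\pi \sqrt{\Delta D_{\mathrm{TV}}(\rho_\pi, \rho_{\pi_E})}$, which is exactly the first term in the target.

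The point requiring care is the final deviation term. Honest subadditivity produces $\sqrt{2\Delta\sqrt{2\log(1/\delta)/m}}$, whereas the statement carries $2\Delta\sqrt{2\log(1/\delta)/m}$ un-rooted; I would either keep the concentration contribution in its un-rooted form (the looser and hence still valid direction once $m$ is large enough that the term lies below one) or record the rooted term and flag the cosmetic mismatch. A secondary bookkeeping point is that Lemma~\ref{lemma_gail_neural_distance} holds only on an event of probability at least $1-\delta$; since every other inequality in the chain is deterministic, the whole bound inherits exactly that same confidence level, so no union bound or rescaling of $\delta$ is needed. Beyond these, the argument is a direct composition of the two lemmas and I expect no genuine analytic obstacle.
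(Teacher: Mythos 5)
Your proposal follows the paper's own proof essentially line for line: apply the upper half of the bridging lemma to get $D_{\mathrm{TV}}(\rho_{\pi_{GA}},\rho_{\pi_E}) \le \sqrt{2\Lambda_{\mathcal{F},\Pi}\, d_{\mathcal{D}}(\rho_{\pi_{GA}},\rho_{\pi_E})}$, substitute the bound of Lemma~\ref{lemma_gail_neural_distance} under the root, use the lower half of the bridging lemma (together with monotonicity of the square root) on the infimum term, and distribute the root over the four summands by subadditivity. The one place your reasoning goes wrong is the final deviation term. Writing $x = 2\Delta\sqrt{2\log(1/\delta)/m}$, subadditivity leaves you with $\sqrt{x}$, and you assert that the un-rooted $x$ is ``the looser and hence still valid direction once $m$ is large enough that the term lies below one.'' That is backwards: for $0<x<1$ one has $x<\sqrt{x}$, so replacing $\sqrt{x}$ by $x$ \emph{shrinks} the bound and is not a valid upper bound in that regime. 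The replacement $\sqrt{x}\le x$ holds only when $x\ge 1$, which is exactly the (somewhat awkward) regime the paper itself invokes to justify the stated form --- a limited number of demonstrations $m$ and $\delta$ close to $0$, so that $\log(1/\delta)/m$ is not small. Your fallback of simply recording the rooted term $\sqrt{2\Delta\sqrt{2\log(1/\delta)/m}}$ and flagging the mismatch is the safe, fully rigorous option. Everything else, including your observation that the $1-\delta$ confidence level is inherited unchanged because all remaining steps are deterministic, matches the paper's argument.
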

From the connection between state-action distribution discrepancy and policy value discrepancy in Section \ref{sec:Framework}, we show the policy value error bound dependent on discount factor, number of samples and model complexity.
\begin{thm}\label{theorem_gail}
Assume that the policy $\pi_{GA}$ optimizes the GAIL objective $d_{\mathcal{D}}(\hat{\rho}_{\pi_E}, \hat{\rho}_{\pi})$ up to an $\epsilon$ error and all discriminator nets in $\mathcal{D}$ are bounded by $\Delta$. $\hat{\mathcal{R}}_{\rho_{\pi_E}}^{(m)}(\mathcal{D})$ is the empirical Rademacher complexity of $\mathcal{D}$. Assume that the reward function is bounded by $R_{\max}$. Then with probability at least $1-\delta$, the following inequality holds.
\begin{align*}
   & \left| V^{\pi_{GA}} - V^{\pi_E} \right| \leq\\
   & \frac{2 R_{\max}  \sqrt{2 \Lambda_{\mathcal{D}, \Pi}}}{1-\gamma}
        \biggl( \underset{\pi \in \Pi}{\inf} \sqrt{\Delta \sqrt{2 D_{\mathrm{JS}}(\rho_{\pi}, \rho_{\pi_{E}})}} + \sqrt{\epsilon}
        + 2 \sqrt{\hat{\mathcal{R}}_{\rho_{\pi_E}}^{(m)}(\mathcal{D})} + 2 \Delta \sqrt{\frac{2 \log(1/\delta)}{m}} \biggr)
\end{align*}
\end{thm}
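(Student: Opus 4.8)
The plan is to chain together the three results already established: the value-to-total-variation bound of Lemma \ref{lemma_bc_value}, the total-variation bound of Lemma \ref{LemmaGailTV}, and the total-variation-to-Jensen-Shannon conversion implicit in Eq.~\eqref{equation_connection_JS_TV}. Concretely, applying Lemma \ref{lemma_bc_value} with $\pi = \pi_{GA}$ immediately gives $\left| V^{\pi_{GA}} - V^{\pi_E} \right| \leq \frac{2 R_{\max}}{1-\gamma} D_{\mathrm{TV}}(\rho_{\pi_{GA}}, \rho_{\pi_E})$, so the entire task reduces to substituting into this the high-probability bound on $D_{\mathrm{TV}}(\rho_{\pi_{GA}}, \rho_{\pi_E})$ supplied by Lemma \ref{LemmaGailTV}, while re-expressing the one stray total-variation term in terms of Jensen-Shannon divergence.

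First I would write the bound of Lemma \ref{LemmaGailTV} verbatim. The only quantity not already in the desired form is $\sqrt{\Delta\, D_{\mathrm{TV}}(\rho_{\pi}, \rho_{\pi_E})}$ appearing inside the infimum over $\pi \in \Pi$. To convert it, I would reuse the inequality chain in Eq.~\eqref{equation_connection_JS_TV}, which for any pair of distributions yields $D_{\mathrm{TV}}(\rho_{\pi}, \rho_{\pi_E}) \leq \sqrt{2\, D_{\mathrm{JS}}(\rho_{\pi}, \rho_{\pi_E})}$. Substituting this bound gives the pointwise inequality $\sqrt{\Delta\, D_{\mathrm{TV}}(\rho_{\pi}, \rho_{\pi_E})} \leq \sqrt{\Delta \sqrt{2\, D_{\mathrm{JS}}(\rho_{\pi}, \rho_{\pi_E})}}$ for every $\pi$, whence the same inequality persists after taking the infimum on both sides. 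Multiplying the resulting bound by $\frac{2 R_{\max}}{1-\gamma}$ and collecting the constant factor $\sqrt{2 \Lambda_{\mathcal{D}, \Pi}}$ produces exactly the claimed expression.

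Since every link in this chain is a composition of monotone operations, there is no genuine obstacle; the single point demanding care is direction-preservation of the two substitutions. Because $t \mapsto \sqrt{t}$ is nondecreasing, replacing $D_{\mathrm{TV}}(\rho_{\pi}, \rho_{\pi_E})$ by its upper bound \emph{inside} the square root preserves the inequality, and because $\inf_{\pi} f(\pi) \leq \inf_{\pi} g(\pi)$ whenever $f \leq g$ pointwise, the upper bound survives the infimum over $\Pi$. The high-probability qualifier $1-\delta$ is inherited unchanged from Lemma \ref{LemmaGailTV}, as both the total-variation-to-Jensen-Shannon step and Lemma \ref{lemma_bc_value} are deterministic. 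Finally, I would note that the finiteness $\Lambda_{\mathcal{D}, \Pi} < \infty$, which keeps the bound meaningful, is exactly the hypothesis carried over from the preceding lemma, so no additional assumption is needed.
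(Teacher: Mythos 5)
Your proposal is correct and follows essentially the same route as the paper: apply the value-to-total-variation bound of Lemma \ref{lemma_bc_value}, plug in the high-probability total-variation bound of Lemma \ref{LemmaGailTV}, and convert the remaining $D_{\mathrm{TV}}$ term inside the infimum via the Pinsker-type inequality $D_{\mathrm{TV}} \leq \sqrt{2 D_{\mathrm{JS}}}$ from Eq.~\eqref{equation_connection_JS_TV}. Your explicit checks of monotonicity under the square root and under the infimum are details the paper leaves implicit, but the argument is the same.
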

The upper bound in Theorem \ref{theorem_gail} has four terms. The first two terms represent the empirical loss for policy $\pi_{GA}$ which decreases as the training process repeats. The last two terms suggest the generalization ability of GAIL. Theorem \ref{theorem_gail} suggests that controlling the model complexity can improve the performance via avoiding overfitting on the empirical distribution, observed by many practical algorithms \cite{var_dis_bottleneck}. Theorem \ref{theorem_gail} implies that the discriminator class $\mathcal{D}$ should be complex enough to distinguish between $\rho_{\pi}$ and $\rho_{\pi_E}$, striking a trade-off with the requirement that $\mathcal{D}$ should be simple enough to be generalizable. We hope this results may provide insights for future improvements in imitation learning algorithms.

\section{Experiments}\label{sec:experiment}

In this section, we conduct experiments to validate the previous theoretical results. Here, we focus on the horizon dependency and sample complexity of imitation learning algorithms. We evaluate imitation learning methods on three Mujoco tasks: \textit{Ant}, \textit{Hopper}, and \textit{Walker}. Reported results are based on the true reward function defined in the OpenAI Gym \cite{Gym}. We consider the following approaches: GAIL \cite{GAIL}, BC \cite{BC}, DAgger \cite{dagger} and apprenticeship learning algorithms described below. As we stated previously, it is computationally expensive to run FEM \cite{abbeel04} and MWAL \cite{SyedBS08}. Following \cite{GAIL}, we consider the accelerated algorithms proposed in \cite{Jonathan16}. In particular, we test FEM, the algorithm of \cite{Jonathan16} using the linear reward function class in \cite{abbeel04}, and GTAL, the algorithm of \cite{Jonathan16} using the convex reward function class of \cite{SyedBS08}. In reality, we cannot simulate infinite-horizon settings, thus we truncate the episode length into $1000$. We report the empirical policy value by Monte Carlo simulation with $20$ trajectories. All experiments run five seeds.  

\begin{figure}[t]
    \centering
    \includegraphics[width=0.5\linewidth]{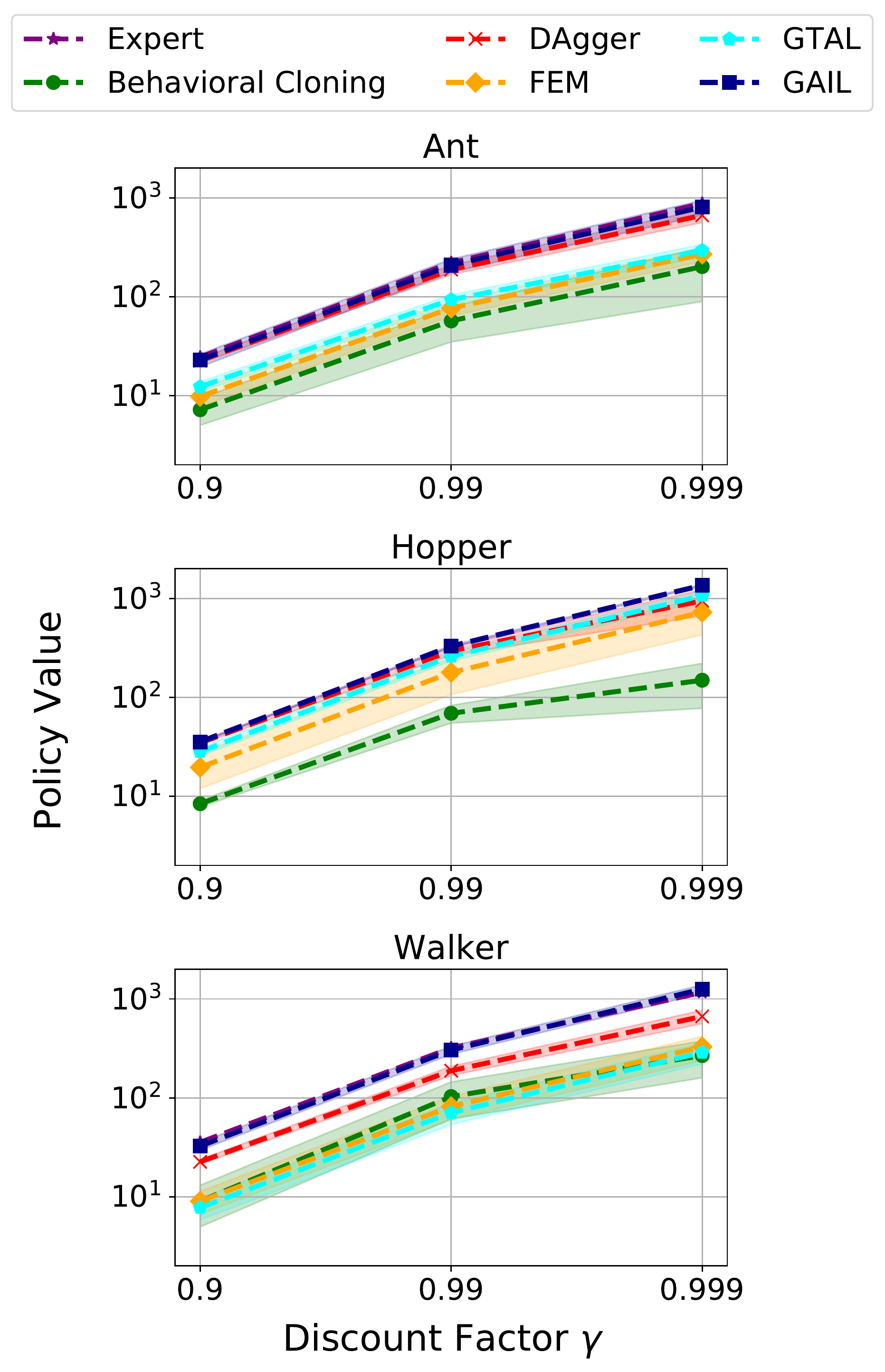}
    \caption{Policy value of learned policies ($m=25$).}
    \label{fig:policy_value}
\end{figure}
\subsection{HORIZON DEPENDENCY}
As discussed in Section \ref{sec:bc}, BC theoretically performs worse than other approaches due to the quadratic horizon dependency. The results of policy value via varying discount factor (the number of expert trajectories $m = 25$) are shown in Figure \ref{fig:policy_value}.

From Figure \ref{fig:policy_value}, we can see that as the discount factor increases, the policy value of all algorithms increases. However, the gap with the expert policy increases much quickly for BC (note that $y$-axis is $\log$ scale), especially on \textit{Hopper}. This phenomenon verifies that optimizing the discrepancy of policy distribution may not lead to a satisfying policy for sequential decision problems, as we discussed in Section \ref{sec:bc}. Though DAgger uses the same optimization objective, it presents better performance than BC thanks to querying for the expert policy when training.

\begin{figure}[t]
    \centering
    \includegraphics[width=0.5\linewidth]{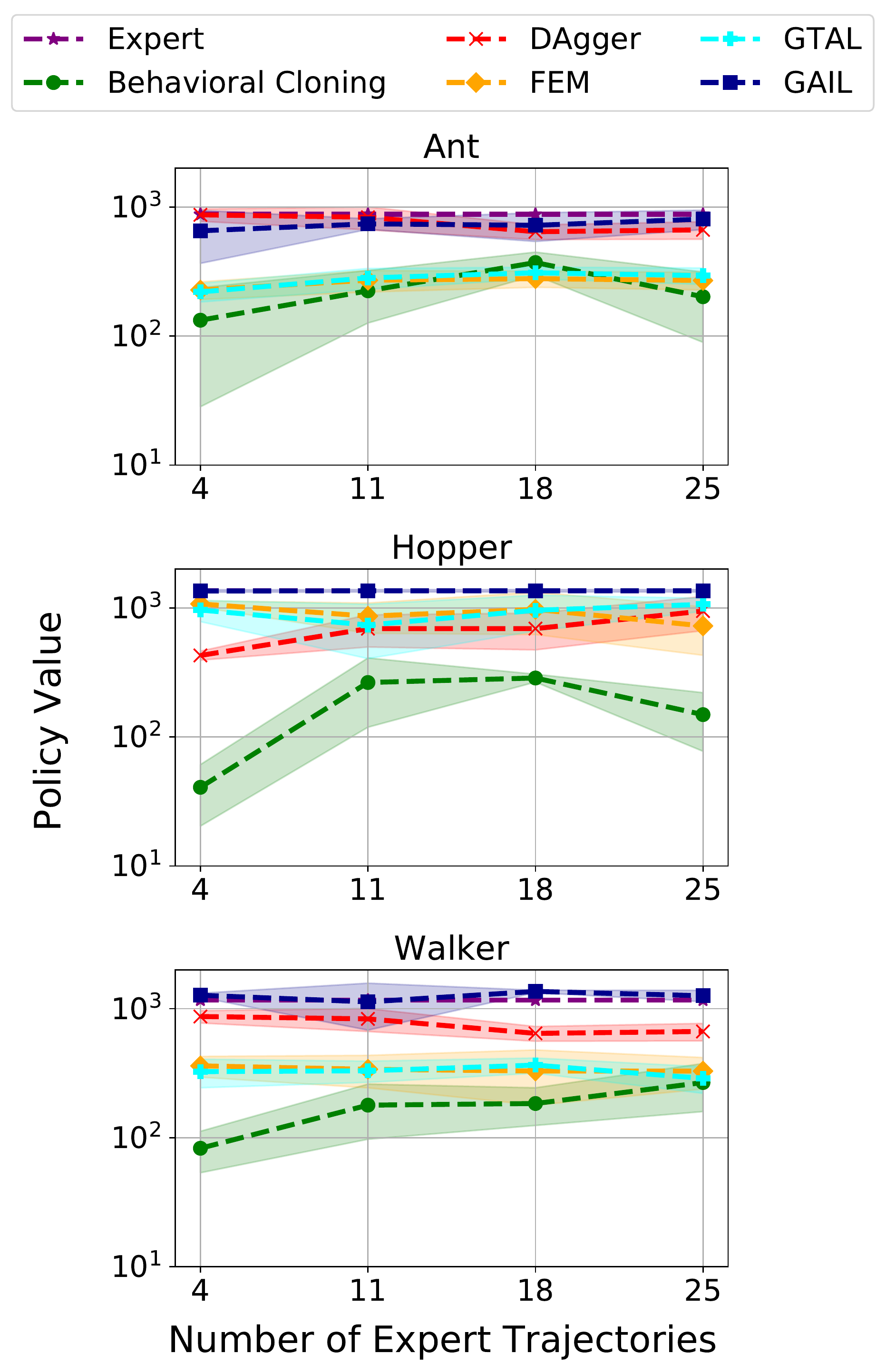}
    \caption{Policy value of learned policies ($\gamma=0.999$).}
    \label{fig:sample complexity}
\end{figure}

\subsection{Sample Complexity}
In this part, we drive into the comparison in terms of sample complexity. We report results in Figure \ref{fig:sample complexity} (discount factor $\gamma = 0.999$). Provided the same number of expert trajectories \footnote{Note that the DAgger requires 2 times expert trajectories rather than the one shown in Figure \ref{fig:sample complexity}.}, adversary-based algorithms including GAIL, FEM, MWAL, always produce better results than behavioral cloning algorithm on all environments. Interestingly, adversary-based algorithms perform well when the number of expert trajectories is small. However, these algorithms generally require more than $2M$ interactions (total interactions for GAIL, FEW, MWAL are $3M$) with the environment to reach a reasonable performance.


\section{Conclusion}
\label{sec:conclusion}

Imitation learning faces the challenge from temporally related decisions. In this paper, we propose a framework to analyze the theoretical property of imitation learning approaches based on discrepancy propagation analysis. Under the infinite-horizon setting, the framework leads to the value discrepancy of behavioral cloning in an order of $O\bigl(\frac{1}{(1-\gamma)^2}\bigr)$. We also show that the framework leads to the value discrepancy of GAIL in an order of $O \bigl( \frac{1}{1-\gamma} \bigr)$. We hope our theoretical results can provide insights for future improvements in imitation learning algorithms.

\section{Acknowledgement}

We thank Dr. Weinan Zhang for his helpful comments. This work is supported by the NSFC (61876077), Jiangsu SF (BK20170013), and Collaborative Innovation Center of Novel Software Technology and Industrialization.

\bibliography{references}
\bibliographystyle{abbrvnat}

\appendix
\section*{Appendices}
\addcontentsline{toc}{section}{Appendices}
\renewcommand{\thesubsection}{\Alph{subsection}}


\subsection{PROOF OF RESULTS IN SECTION \ref{sec:Framework}}\label{appendix_for_proof}

\noindent\textbf{Proofs for Lemma \ref{leamma_s_cond}.}
\begin{proof}
According to the definition of $\gamma$-discounted state distribution in Eq.(\ref{equation_state_distribution}), we have that
\begin{equation}
\begin{split}
 d_{\pi} &= (1-\gamma)\sum_{t=0}^{\infty} \gamma^{t} Pr(s_t=s|\pi, d_0) \\
      &=(1-\gamma)\sum_{t=0}^{\infty} \gamma^{t} P_{\pi}^{t} d_{0} \\
      &=(1-\gamma)(I - \gamma P_{\pi})^{-1} d_{0}
\end{split}
\end{equation}
where $P_{\pi} \in \mathbb{R}^{\vert \mathcal{S} \vert \times \vert \mathcal{S} \vert}$ and $P_{\pi}(s^{\prime}|s) = \sum_{a \in \mathcal{A}} P_{sa}^{s^{\prime}} \, \pi(a|s)$. Then we get that
\begin{equation}\label{d_pi}
    \begin{split}
    d_{\pi} - d_{\pi_{E}} &= (1-\gamma) [\left(I - \gamma P_{\pi}\right)^{-1} - \left(I - \gamma P_{\pi_{E}}\right)^{-1}] \; d_{0}
    \\
    &= (1-\gamma)(M_{\pi} - M_{\pi_{E}}) \; d_{0}
\end{split}
\end{equation}
Where $M_{\pi} = \left(I - \gamma P_{\pi}\right)^{-1}$ and $M_{\pi_{E}} = \left(I - \gamma P_{\pi_{E}}\right)^{-1}$.
For the term $M_{\pi} - M_{\pi_{E}}$, we get that
\begin{equation}\label{m_pi}
    \begin{split}
    M_{\pi} - M_{\pi_{E}} &= M_{\pi} \left( M_{\pi_E}^{-1} - M_{\pi}^{-1} \right) M_{\pi_{E}}
    \\
    &= \gamma (P_{\pi} - P_{\pi_{E}}) M_{\pi}
\end{split}
\end{equation}
Combining Eq. ($\ref{d_pi}$) with $(\ref{m_pi})$, we have
\begin{equation}
    \begin{split}
        d_{\pi} - d_{\pi_{E}} &= (1-\gamma) \gamma M_{\pi} \left( P_{\pi} - P_{\pi} \right) M_{\pi_E} d_{0}\\
        &= \gamma M_{\pi}\left( P_{\pi} - P_{\pi_E} \right) d_{\pi_E}
    \end{split}
\end{equation}
According to the definition of total variation and property of operator norm, we get that
\begin{equation}\label{bound_TV}
\begin{split}
        D_{\mathrm{TV}}(d_{\pi}, d_{\pi_{E}}) &= \frac{\gamma}{2} \left\| M_{\pi} (P_{\pi} - P_{\pi_{E}}) d_{\pi_{E}} \right\|_{1} \\
    &\leq \frac{\gamma}{2} \left\| M_{\pi} \right\|_{1} 
    \left\| (P_{\pi} - P_{\pi_{E}}) d_{\pi_E} \right\|_{1}
\end{split}
\end{equation}
We first show that $M_{\pi}$ is bounded:
\begin{equation}\label{bound_M}
    \left\| M_{\pi} \right\|_{1} 
    = \left\| \sum_{t=0}^{\infty} \gamma^{t} P_{\pi}^{t} \right\|_{1}
    \leq \sum_{t=0}^{\infty} \gamma^{t} \left\| P_{\pi} \right\|_{1}^{t}
    \leq \sum_{t=0}^{\infty} \gamma^{t}
    =\frac{1}{1-\gamma}
\end{equation}
Then we show that $\left\| (P_{\pi} - P_{\pi_{E}}) d_{\pi_{E}} \right\|_{1}$ is bounded:
\begin{equation}\label{bound_second}
    \begin{split}
        \left\| (P_{\pi} - P_{\pi_{E}}) d_{\pi_{E}} \right\|_{1}
        &= \sum_{s^{\prime}} \left| \sum_{s} (P_{\pi}(s^{\prime}| s) - P_{\pi_{E}}(s^{\prime}| s) ) d_{\pi_{E}}(s) \right|
        \\
        &\leq \sum_{s, s^{\prime}} \left|P_{\pi}(s^{\prime}| s) - P_{\pi_{E}}(s^{\prime}| s)  \right|d_{\pi_{E}}(s)
        \\
        &= \sum_{s, s^{\prime}} \left| \sum_{a} P_{s a}^{s^{\prime}} \left( \pi(a|s) - \pi_{E}(a|s)  \right) \right| d_{\pi_{E}}(s)
        \\
        &\leq \sum_{s, a, s^{\prime}} P_{s a}^{s^{\prime}} \left| \pi(a|s) - \pi_{E}(a|s) \right|d_{\pi_{E}}(s)
        \\
        &= \sum_{s} d_{\pi_{E}}(s) \sum_{a}\left| \pi(a|s) - \pi_{E}(a|s) \right|
        \\
        &= 2 \mathbb{E}_{s \sim d_{\pi_{E}}}[D_{\mathrm{TV}}(\pi_{E}(\cdot|s),\pi(\cdot|s) )]
    \end{split}
\end{equation}
Combining Eq.\eqref{bound_M} and (\ref{bound_second}) with (\ref{bound_TV}), we complete the proof.
\end{proof}

\subsection{PROOF OF RESULTS IN SECTION \ref{sec:bc}}\label{appendix_bc}

\noindent\textbf{Proofs for Lemma \ref{lemma_bc_pac}}.
\begin{proof}
Let $\pi_1, \dots, \pi_{\vert \Pi \vert} $ be the policy in $\Pi$. For convenience of proof, let $\hat{R}_{s}\left( \pi \right) = \frac{1}{m}\sum_{i=1}^{m} I \left[\pi\left(s^{\left(i\right)}\right) \not= a^{\left(i\right)}\right]$ \\and $R\left( \pi \right) = \mathbb{E}_{s \sim d_{\pi_{E}}(s)}[D_{\mathrm{TV}}(\pi_E(\cdot|s), \pi(\cdot|s))]$. By Hoeffding's inequality and union bound, the following inequality holds:
\begin{equation}\label{equation_gail_tv_}
    \begin{aligned}
        &P \left[ \exists \pi \in \Pi \bigl| \widehat{R}_{S}\left(\pi\right) - R\left(\pi\right) \bigr| > \epsilon \right]
        \leq {
        \sum_{\pi \in \Pi} P \left[ \bigl| \widehat{R}_{s}\left( \pi \right) - R\left( \pi \right) \bigr| > \epsilon \right]
        }\\
        &\leq{
        2 \bigl| \Pi \bigr| \exp\left( -2m\epsilon^2 \right) 
        }
    \end{aligned}
\end{equation}
Then, we can get that:
\begin{equation*}
    \forall \pi \in \Pi, P\left[ \bigl| \widehat{R}_{s}\left( \pi \right) - R\left( \pi \right) \bigr| \leq \epsilon \right] \geq 1 - 2\left| \Pi \right|exp\left( -2m\epsilon^2 \right)
\end{equation*}
Setting the right side to be equal to $1-\delta$ completes the proof.
\end{proof}

\subsection{PROOF OF RESULTS IN SECTION \ref{sec:GAIL}}\label{appendix_gail}

\noindent\textbf{Proofs for Lemma \ref{lemma_gail_neural_distance}.}
\begin{proof}
Assume that $\hat{\rho}_{\pi}$ optimizes the GAIL loss $d_{\mathcal{D}}(\hat{\rho}_{\pi_E}, \hat{\rho}_{\pi})$ up to an $\epsilon$ error.
\begin{equation}\label{equation_gail_loss}
    d_{\mathcal{D}}(\hat{\rho}_{\pi_E}, \hat{\rho}_{\pi}) \leq \underset{\pi \in \Pi}{\inf} d_{\mathcal{D}}(\hat{\rho}_{\pi_E}, \rho_{\pi}) + \epsilon
\end{equation}
With the standard derivation and Eq.(\ref{equation_gail_loss}), we prove that $d_{\mathcal{D}}(\rho_{\pi}, \rho_{\pi_E}) - \underset{\pi \in \Pi}{\inf}d_{\mathcal{D}}(\rho_{\pi_E}, \rho_{\pi})$ has an upper bound.
\begin{equation}\label{EquationGAILLemma}
    \begin{aligned}
        & d_{\mathcal{D}}(\rho_{\pi_E}, \rho_{\pi}) - \underset{\pi \in \Pi}{\inf} d_{\mathcal{D}} (\rho_{\pi_E}, \rho_{\pi})
        \\
        = & d_{\mathcal{D}}(\rho_{\pi_E}, \rho_{\pi}) - d_{\mathcal{D}}(\hat{\rho}_{\pi_E}, \hat{\rho}_{\pi}) + d_{\mathcal{D}}(\hat{\rho}_{\pi_E}, \hat{\rho}_{\pi}) - \underset{\pi \in \Pi}{\inf} d_{\mathcal{D}}(\rho_{\pi_E}, \rho_{\pi})
        \\
        \leq & d_{\mathcal{D}}(\rho_{\pi_E}, \rho_{\pi}) - d_{\mathcal{D}}(\hat{\rho}_{\pi_E}, \hat{\rho}_{\pi}) + \underset{\pi \in \Pi}{\inf}d_{\mathcal{D}}(\hat{\rho}_{\pi_E}, \rho_{\pi}) + \epsilon - \underset{\pi \in \Pi}{\inf} d_{\mathcal{D}}(\rho_{\pi_E}, \rho_{\pi})
        \\
        \leq & 2 \underset{\pi \in \Pi}{sup} \left| d_{\mathcal{D}}(\rho_{\pi_E}, \rho_{\pi}) - d_{\mathcal{D}}(\hat{\rho}_{\pi_E}, \rho_{\pi}) \right| + \epsilon
        \\
        \leq & 2 \underset{D \in \mathcal{D}}{sup} \left| \mathbbm{E}_{s, a \sim \rho_{\pi_E}}[D(s,a)] - \mathbb{E}_{s, a \sim \hat{\rho}_{\pi_E}} [D(s,a)] \right| + \epsilon
    \end{aligned}
\end{equation}

Assume that the discriminator set $\mathcal{D}$ consists of bounded function with $\Delta$, i.e. $\Delta := \underset{ D \in \mathcal{D} }{ \sup } \| D \|_{\infty} \leq \Delta$. According to McDiarmid 's inequality, with probability at least $1-\delta$, the following inequality holds.
\begin{equation*}
    \begin{aligned}
        & \underset{D \in \mathcal{D}}{\sup} \left( \mathbb{E}_{s, a \sim \rho_{\pi_E} }[D(s,a)] - \mathbb{E}_{s, a \sim \hat{\rho}_{\pi_E} }[D(s,a)] \right)
        \\
        & \leq
        \mathbb{E} \left[ \underset{D \in \mathcal{D}}{\sup} \left( {\mathbb{E}_{s, a \sim \rho_{\pi_E}}[D(s,a)] - \mathbb{E}_{s, a \sim \hat{\rho}_{\pi_E}}[D(s,a)]} \right) \right]
        + 2\Delta\sqrt{\frac{\log (1/\delta)}{2m}}
    \end{aligned}
\end{equation*}
Derived by Rademacher complexity theory, we have that
\begin{equation}\label{EquationRademacher}
    \begin{aligned}
        &\mathbb{E} \left[ \underset{D \in \mathcal{D}}{\sup} \left( {\mathbb{E}_{s, a \sim \rho_{\pi_E}}[D(s,a)] - \mathbb{E}_{s, a \sim \hat{\rho}_{\pi_E}}[D(s,a)]} \right) \right]
        \\
        \leq & \mathbb{E}_{{\bm{\sigma}}} \left[ \sup_{D \in \mathcal{D}} \sum_{i=1}^{m} \frac{1}{m} \sigma_{i} D(X_{i}) \right] 
        = 2\hat{\mathcal{R}}_{\rho_{\pi_E}}^{(m)}(\mathcal{D})
    \end{aligned}
\end{equation}

Combining Eq.(\ref{EquationGAILLemma}) with Eq.(\ref{EquationRademacher}), we complete the proof.
\end{proof}

\noindent\textbf{Proof for Lemma \ref{LemmaGailTV}}
\begin{proof}
Based on the Proposition 2.9 in \cite{D-G_tradeoff} and Pinsker 's inequality, we have that
\begin{equation}
    \frac{1}{\Delta} d_{\mathcal{D}}(\rho_{\pi}, \rho_{\pi_E}) \leq D_{\mathrm{TV}}(\rho_{\pi}, \rho_{\pi_E}) \leq \sqrt{2 \Lambda_{\mathcal{D}, \Pi} d_{\mathcal{D}}(\rho_{\pi}, \rho_{\pi_E})}
\end{equation}
Where $\Lambda_{\mathcal{D}, \Pi} = \underset{\pi \in \Pi}{\sup} \| \log (\frac{\rho_{\pi}}{\rho_{\pi_E}}) \|_{\mathcal{D}, 1} < \infty$ and\\
    {\small$\| \log (\frac{\rho_{\pi}}{\rho_{\pi_E}}) \|_{\mathcal{D}, 1} = \inf \bigl\{ \sum_{i=1}^{n} \vert w_{i} \vert : \log(\frac{\rho_{\pi}}{\rho_{\pi_E}}) = \sum_{i=1}^{n} w_{i} D_{i} + w_{0}, \forall n \in \mathbb{N}, w_{0}, w_{i} \in \mathbb{R}, D_{i} \in \mathcal{D}  \bigr\}$}.
Then we get that
\begin{equation}\label{equation_gail_tv_}
    \begin{aligned}
        &D_{\mathrm{TV}}(\rho_{\pi}, \rho_{\pi_E})
        \\
        \leq & \sqrt{2 \Lambda_{\mathcal{F}, \Pi} \left( \underset{\pi \in \Pi}{\inf} \Delta D_{\mathrm{TV}}(\rho_{\pi}, \rho_{\pi_E}) + 4 \hat{\mathcal{R}}_{\rho_{\pi_E}}^{(m)}(\mathcal{D}) + 2 \Delta \sqrt{\frac{2 \log (1/\delta)}{m}} + \epsilon \right)} 
        \\
        \leq& {
        {\sqrt{2 \Lambda_{\mathcal{D}, \Pi}}}{
        \biggl( \underset{\pi \in \Pi}{ \inf } \sqrt{\Delta D_{\mathrm{TV}}(\rho_{\pi}, \rho_{\pi_E})} +  \sqrt{\epsilon}
        }
        }
        \\
        &{ + 2 \sqrt{\hat{\mathcal{R}}_{\rho_{\pi_E}}^{(m)}(\mathcal{D})} + \sqrt{2 \Delta \sqrt{\frac{2 \log (1/\delta)}{m}}} \biggr)
        }
        \end{aligned}
\end{equation}

Consider that the number of expert demonstration is limited and confidence ratio $1-\delta$ is close to $1$, we notice that$\sqrt{2 \Delta \sqrt{\frac{2 \log (1/\delta)}{m}}} \leq 2 \Delta \sqrt{\frac{2 \log (1/\delta)}{m}} $.
Combining it with Eq.\eqref{equation_gail_tv_}, we conclude the proof.
\end{proof}

\noindent\textbf{Proof for Theorem \ref{theorem_gail}.}
\begin{proof}
Recall the definition of value function, we notice that
$\left| V^{\pi_E} - V^{\pi}  \right| \leq \frac{2 R_{\max}}{1-\gamma} D_{\mathrm{TV}}(\rho_{\pi}, \rho_{\pi_E})$.
Combining with Lemma \ref{LemmaGailTV}, we have the following result.
{
\small\begin{equation}
    \begin{aligned}
        \left| V^{\pi} - V^{\pi_E} \right|
        \leq & 
        {
        \frac{2 R_{\max}}{1-\gamma} D_{\mathrm{TV}}(\rho_{\pi}, \rho_{\pi_E})
        }
        \\
        \leq & {\frac{2 R_{\max}  \sqrt{2 \Lambda_{\mathcal{D}, \Pi}}}{1-\gamma}
        \biggl( \underset{\pi \in \Pi}{\inf} \sqrt{\Delta D_{\mathrm{TV}}(\rho_{\pi}, \rho_{\pi_E})} + \sqrt{\epsilon} 
        }
        \\
        &+ 
        {2 \sqrt{\hat{\mathcal{R}}_{\rho_{\pi_E}}^{(m)}(\mathcal{D})}  + \sqrt{2 \Delta \sqrt{\frac{2 log(1/\delta)}{m}}}  \biggr)}
    \end{aligned}
    \end{equation}
}
Derived by the connection between total variation and JS divergence shown in Eq.(\ref{equation_connection_JS_TV}), we complete the proof.
\end{proof}

\end{document}